\documentclass[10pt]{article}
\usepackage{amsmath,amsthm,amssymb,latexsym,mathtools}
\usepackage{graphicx}
\usepackage{stmaryrd}
\usepackage{etoolbox}

\allowdisplaybreaks[4]

\usepackage{amsmath,amsthm,amsfonts,etoolbox,stmaryrd,enumitem}
\usepackage{booktabs}
\usepackage[shortcuts]{extdash}
\usepackage{algorithm}
\usepackage[noend]{algpseudocode}
\algrenewcommand\algorithmicthen{\relax}
\algdef{C}[IF]{IF}{ElsIf}[1]{\textbf{elif}\ #1}
\algrenewcommand\algorithmicdo{\relax}

\usepackage[authoryear]{natbib}
\bibpunct{(}{)}{;}{a}{,}{,}
\usepackage[pdfpagemode=UseNone,pdfstartview=FitH]{hyperref}

\newtheorem{theorem}{Theorem}

\newtheorem{lemma}[theorem]{Lemma}

\theoremstyle{definition}

\theoremstyle{remark}
\newtheorem{remark}[theorem]{Remark}

\newcommand{\N}{\mathbb{N}}

\newcommand{\R}{\mathbb{R}}

\newcommand{\e}{\mathrm{e}}

\DeclareMathOperator{\E}{\mathbb{E}}
\newcommand{\FFF}{\mathcal{F}}
\newcommand{\NNN}{\mathcal{N}}

\title{Inductive Venn--Abers and related regressors}
\author{Ivan Petej and Vladimir Vovk}

\begin{document}
\maketitle

\begin{abstract}
  Venn--Abers predictors are probabilistic predictors
  that enjoy appealing properties of validity,
  but their major limitation is that they have been applicable
  only to binary classification,
  apart from a recent extension to bounded regression.
  We generalize them to the case of unbounded regression,
  which requires adding an element of conformal prediction.
  In our simulation and empirical studies we investigate
  the predictive efficiency of point regressors derived from Venn--Abers regressors
  and argue that they somewhat improve the predictive efficiency
  of standard regressors for larger training sets.

  \medskip

  \noindent
  The conference version of this paper is to appear in the \emph{Proceedings of Machine Learning Research},
  volume~329 (COPA 2026).
  This version has been updated as compared with the conference version,
  and the experiments in it are based on a new version of our Python library
  \citep{Petej:2026}.
  While the main conclusions are unchanged,
  the experimental results for our methods (the IVAR and CVAR) are slightly better
  than in the COPA 2026 paper.
\end{abstract}

\section{Introduction}

This paper explores the problem of regression estimation
as presented in, e.g., \citet[Sect.~1.4]{Vapnik:1998}.
In the standard setting of statistical learning,
where we observe an IID sequence of random pairs $(X,Y)$
consisting of objects $X$ and their labels $Y\in\R$,
the (ideal) \emph{regression estimator} (of $Y$ given $X$) maps each object $x$
to the expected value $\E(Y\mid X=x)$ of its label $Y$ conditional on observing $x$.
Vapnik lists regression estimation
as one of three basic statistical problems \citep[Sect.~1.2]{Vapnik:1998}.
We develop ideas of conformal prediction
\citep{Vovk/etal:2022book,Angelopoulos/etal:arXiv2411}
and introduce an algorithm for regression estimation
that satisfies a natural distribution-free notion of validity.

We are interested in the distribution-free setting,
when nothing is known about the distribution generating one pair $(X,Y)$
and we are only given a training sequence of labelled objects
and an unlabelled test object.
In typical cases we cannot hope to find the true regression estimate $\E(Y\mid X=x)$
(even when it is well-defined),
and so we lower the bar in three respects
in our definition of a valid regression estimator.
First, we allow estimators of the form $\E(Y\mid\FFF)$ for some $\sigma$-algebra $\FFF$;
we will express this by saying that our regression estimator is perfectly calibrated.
Ideally, $\FFF$ should be close to the $\sigma$-algebra generated
by the training set and test object.
Second, we allow our algorithms to output intervals
(ideally short ``imprecise regression estimates'')
containing $\E(Y\mid\FFF)$.
And third, we replace $\E(Y\mid\FFF)$ by $\E(Y'\mid\FFF)$,
where $Y'$ is a ``regularized'' version of $Y$ such that $Y'=Y$ with high probability.
(This is the element of conformal prediction that we mentioned in the abstract;
it is required in the absence of bounds on the test label.)

As usual, we consider two principal requirements for our algorithms,
validity (technically, perfect calibration) and efficiency.
Validity (in this technical sense) will be guaranteed for our algorithms,
but to achieve efficiency we will use existing regression algorithms
that we believe to be efficient, although perhaps miscalibrated.
We will develop methods for improving their calibration.
These methods will be adaptations
of the methods used in \citet{Vovk/etal:2015-local}
(and presented in much greater detail in \citealt[Chap.~6]{Vovk/etal:2022book})
in the context of binary classification,
with bounded regression considered earlier by \citet{Laan/Alaa:2024}.

The problem of regression estimation considered in this paper
is very different from the regression problems in conformal prediction,
where the task is to output a prediction region
(typically a prediction interval) for a test label
with a pre-specified coverage probability.
In regression estimation the task is to cover the expected label $\E(Y\mid\FFF)$
rather than the label $Y$ itself.
This will allow us to produce much shorter intervals
(especially as we replace $Y$ by $Y'$ that is not guaranteed to coincide with $Y$).

The main goal of the computational experiments reported in this paper
is to demonstrate that application of our methods leads to an improvement
in the performance of standard point regressors
(we consider those implemented in \texttt{scikit-learn}).
The improvement is not as significant as we had hoped
and disappears for smaller datasets.

\section{Comparisons with Literature}
\label{sec:literature}

The results of this paper are not directly comparable with the existing literature
because of a somewhat unusual notion of validity that we use.
But there are several related approaches.

A strand of research that is closely related
to what we do in this paper is conformal regression,
already mentioned in the previous section.
Important advances in conformal regression include \citet{Romano/etal:2019},
offering very flexible methods based on quantile regression,
and \citet{Gibbs/etal:2025},
establishing conditional validity results.
See, e.g., \citet{Angelopoulos/etal:arXiv2411} for a recent review
of conformal prediction in general and conformal regression in particular.

The goal of conformal regression is to produce provably valid,
under the assumption of IID data,
prediction intervals for the label of a test object.
Since validity here means a guaranteed coverage probability,
this is a much more ambitious problem than regression estimation
dealt with in this paper,
as mentioned earlier.

An even more closely related, but still very different, problem
is that of covering the conditional expectation $\E(Y\mid X=x)$
for a given test object $x$.
Interestingly, for a nonatomic distribution of objects,
this problem is as difficult as predicting the label $Y$ itself
\citep[Theorem~11.3]{Angelopoulos/etal:arXiv2411}.

Conformal predictive distributions \citep[Chap.~7]{Vovk/etal:2022book}
are different from conformal regression in that, instead of prediction intervals,
they output full predictive distributions for future labels
(assumed to be real-valued, as in conformal regression).
To achieve validity these predictive distributions should be imprecise
in a certain sense.
Our current task is easier in that we only aim to cover $\E(Y'\mid\FFF)$,
but we will achieve another property of validity,
namely perfect calibration instead of calibration in probability
(the two notions of calibration are not comparable in general).
See \citet{Allen/etal:arXiv2503} for a recent analysis
of achievable properties of validity for conformal predictive distributions.

The property of validity used in this paper is inherited from Venn prediction,
which is, however, typically applied to produce imprecise probability forecasts
in classification problems.
Venn prediction (including Venn--Abers prediction) is reviewed
in \citet[Chap.~6]{Vovk/etal:2022book}
and Venn--Abers prediction is reviewed
in \citet[Sect.~12.4]{Angelopoulos/etal:arXiv2411}.

Venn--Abers predictors were extended to the case of bounded regression
by \citet{Laan/Alaa:2024}, as mentioned earlier.
Our bounded Venn--Abers regressor introduced in the next section
is just an inductive version of Algorithm~1 in \citet{Laan/Alaa:2024}.
As the next step, van der Laan and Alaa develop an algorithm
\citep[Algorithm~2]{Laan/Alaa:2024}
for ``self-calibrating conformal prediction'' based on their Algorithm 1
(so that bounded Venn--Abers regression is only a small part
of \citealt{Laan/Alaa:2024}).
Finally, \citet{Laan/Alaa:2025} extend these results to general loss functions
interpreting the usual regression setting as the case of squared error loss.

A key step in our main algorithm is replacing the true labels $Y$
by their regularized versions $Y'$, as discussed in the previous section.
We need it to avoid the vacuous regression interval $(-\infty,\infty)$
(see the end of Sect.~\ref{sec:IVAR}).
Assumptions made by \citet{Laan/Alaa:2024} and \citet{Laan/Alaa:2025}
allow them to bypass this step,
but we avoid making any assumptions on the data-generating distribution
apart from the observations being IID.

In general, the properties of validity of our procedures
are sufficiently different from the properties of validity
considered in the literature to make direct comparison between them
in empirical or simulation studies
difficult or impossible.
Therefore, in our experimental section (Sect.~\ref{sec:experiments})
we concentrate on evaluating the predictive efficiency of point regressors
derived from imprecise regressors implementing our methods;
namely, we compare the predictive performance of those point regressors
with that of the base algorithms.

\section{Inductive Venn--Abers Regressors}
\label{sec:IVAR}

In this section we ignore the computational complexity
of our methods (it will be one of the topics of Sect.~\ref{sec:algorithm}).
Fix a measurable space $\mathbf{X}$ (the \emph{object space})
and set $\mathbf{Z}:=\mathbf{X}\times\R$ (this is the \emph{example space}).
Each example $z=(x,y)\in\mathbf{Z}$ consists of an object $x\in\mathbf{X}$
and its real-valued label $y$.

We consider two settings (leading to formally different, albeit similar,
prediction algorithms).
In the setting of \emph{bounded regression}
we are given a finite interval $[C_*,C^*]\subseteq\R$
guaranteed to contain all labels,
training and test.
Otherwise, we have \emph{unbounded regression}.
In the former case, the algorithm and its property of validity
are simpler, and we consider them separately
even though we are not particularly interested in them per se
as we would like to avoid any assumptions apart from IID observations.

Notice that an algorithm for unbounded regression can also be applied
in the bounded setting,
and it may well be preferable if the bounds $C_*$ and $C^*$ are loose.
This is another reason why our main interest is in unbounded regression.

\subsection{Bounded regression}
\label{subsec:bounded}

Fix any regression algorithm (such as a neural net)
producing point predictions as the \emph{base algorithm}.
We are given a training set consisting of $l>k$ examples
(where $k\in\N:=\{1,2,\dots\}$ is typically large
and $l-k\in\N$ is also large)
and a test object $x\in\mathbf{X}$;
$k$ is a parameter of our algorithm.

In bounded regression,
we are given an interval $[C_*,C^*]$ guaranteed to contain all labels.
The corresponding \emph{bounded inductive Venn--Abers regressor} (bounded IVAR)
produces the \emph{regression interval}
\begin{equation}\label{eq:interval}
  [\hat y_*,\hat y^*]
  :=
  \left[
    f_*(r),f^*(r)
  \right]
\end{equation}
for the test label,
where $f^*$, $f_*$, and $r$ are defined as follows:
\begin{enumerate}
\item
  Randomly split the training set of size $l>k$ into two parts,
  a \emph{proper training set} of size $l-k$
  and a \emph{calibration set} $z_1,\dots,z_k$ of size $k$;
  for each $i\in\{1,\dots,k\}$,
  $z_i=(x_i,y_i)$ consists of an object $x_i$ and its label $y_i$.
\item
  Train the regression algorithm on the proper training set
  obtaining a \emph{prediction rule} $R:\mathbf{X}\to\R$
  (a measurable function)
  mapping objects to their predicted labels.
\item
  Find the prediction $r_i:=R(x_i)$ (\emph{base prediction})
  for the calibration object $x_i$,
  $i=1,\dots,k$,
  and the base prediction $r:=R(x)$ for the test object $x$.
\item\label{it:bounded-fit-1}
  Fit isotonic regression to $(r_1,y_1),\dots,(r_k,y_k),(r,C^*)$
  obtaining an isotonic calibrator~$f^*$.
\item
  Set $\hat y^*:=f^*(r)$.
\item\label{it:bounded-fit-2}
  Fit isotonic regression to $(r_1,y_1),\dots,(r_k,y_k),(r,C_*)$
  obtaining an isotonic calibrator~$f_*$.
\item\label{it:bounded-last}
  Set $\hat y_*:=f_*(r)$.
\end{enumerate}

The bounded IVAR is a simple modification of the inductive Venn--Abers predictor
as defined in \citet{Vovk/etal:2015-local}.
First, we relax the condition $y_i\in\{0,1\}$ of binary labels
by allowing the labels to take intermediate values, $y_i\in[0,1]$,
and then we scale the interval $[0,1]$ to arbitrary $[C_*,C^*]$.
As mentioned in the previous section,
non-inductive Venn--Abers regressors were introduced by \citet{Laan/Alaa:2024}.

\subsection{Unbounded regression}
\label{subsec:unbounded}

We again fix a regression algorithm
and are given a training set of size $l>k$.
Another parameter of our algorithm
is $m\in\{1,\dots,\lfloor(k-1)/2\rfloor\}$;
we are interested in a small $m$, such as $m=1$.
The corresponding \emph{inductive Venn--Abers regressor} (IVAR)
produces the following regression interval of the form \eqref{eq:interval}
for the test label:
\begin{enumerate}
\item
  Randomly split the training set of size $l>k$ into two parts,
  a \emph{proper training set} of size $l-k$
  and a \emph{calibration set} $z_1,\dots,z_k$ of size $k$,
  as before.
\item
  Train the regression algorithm on the proper training set
  obtaining a prediction rule $R:\mathbf{X}\to\R$.
\item
  Find the base predictions $r_i:=R(x_i)$, $i=1,\dots,k$, and $r:=R(x)$.
\item\label{it:f^*}
  Replace the $m$ smallest calibration labels $y_i$
  by the $(m+1)$th smallest calibration label $y_*$
  and replace the $m-1$ largest calibration labels $y_i$
  by the $m$th largest calibration label $y^*$.
  (Notice the asymmetry in the definitions of $y_*$ and $y^*$.)
  In other words, let the new calibration labels be
  \begin{equation}\label{eq:y'}
    y'_i
    :=
    \begin{cases}
      y_* & \text{if $y_i<y_*$}\\
      y^* & \text{if $y_i>y^*$}\\
      y_i & \text{otherwise.}
    \end{cases}
  \end{equation}
  (Notice that the recipe is still unambiguous when there are ties among $y_i$.)
\item\label{it:fit-1}
  Fit isotonic regression to $(r_1,y'_1),\dots,(r_k,y'_k),(r,y^*)$
  obtaining an isotonic calibrator~$f^*$.
\item
  Set $\hat y^*:=f^*(r)$.
\item\label{it:f_*}
  Replace the $m$ largest calibration labels $y_i$
  by the $(m+1)$th largest calibration label $y^*$
  and replace the $m-1$ smallest calibration labels $y_i$
  by the $m$th smallest calibration label $y_*$.
  (Notice that these $y_*$ and $y^*$ are different
  from those in step~\ref{it:f^*}.)
  In other words, let the new calibration labels be
  defined as \eqref{eq:y'} for the new $y_*$ and $y^*$.
\item\label{it:fit-2}
  Fit isotonic regression to $(r_1,y'_1),\dots,(r_k,y'_k),(r,y_*)$
  obtaining an isotonic calibrator~$f_*$.
\item
  Set $\hat y_*:=f_*(r)$.
\end{enumerate}

When fitting isotonic regression in steps~\ref{it:fit-1} and~\ref{it:fit-2}
(and the analogous steps, \ref{it:bounded-fit-1} and \ref{it:bounded-fit-2},
in the bounded IVAR),
we always use the standard pool-adjacent-violators algorithm
(PAVA; see, e.g., \citealt[Sect.~1.2]{Barlow/etal:1972}).
Notice that the new steps \ref{it:f^*} and \ref{it:f_*} in the IVAR
as compared with the bounded IVAR are really needed:
if we just set $y^*:=\infty$ and $y_*:=-\infty$,
the PAVA will produce $(-\infty,\infty)$ as the regression interval.

\section{Validity}

We often write $Z_1,\dots,Z_k$, where $Z_i=(X_i,Y_i)$,
for the calibration examples and $(X,Y)$ for the test example,
in order to emphasize that they are considered as random elements.

First we state a simpler property of validity,
the one for bounded regression
(including binary classification as a special case).
So we assume that the labels take values in $[C_*,C^*]$.
Let us say that a random variable $S$ is \emph{perfectly calibrated}
as a regression estimate of $Y$ if $S=\E(Y\mid S)$ a.s.
(In our terminology we follow, e.g., \citealt[Sect.~6.2.1]{Vovk/etal:2022book},
\citealt[Chap.~12]{Angelopoulos/etal:arXiv2411},
\citealt{Laan/Alaa:2024}, and \citealt{Laan/Alaa:2025};
another popular term is ``auto-calibration'': see, e.g.,
\citealt[Definition 3.1]{Kruger/Ziegel:2021}.)

\begin{remark}\upshape
  An equivalent definition of perfect calibration
  is that a random variable $S$ is said to be perfectly calibrated
  as a regression estimate of $Y$ if $S=\E(Y\mid\FFF)$ a.s.\
  for some $\sigma$-algebra $\FFF$
  (in which case we may also say that $S$ is \emph{ideal} relative to $\FFF$).
\end{remark}

A \emph{selector} for an IVAR is a random variable
that always belongs to the regression interval output by the IVAR.

\begin{theorem}\label{thm:simple-validity}
  For any bounded IVAR,
  there is a selector $S$ that is perfectly calibrated for the test label,
  i.e., $\E(Y\mid S)=S$ a.s.
\end{theorem}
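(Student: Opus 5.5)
The selector is the natural one from Venn--Abers theory: let $S:=f_Y(r)$, where $f_Y$ is the isotonic calibrator obtained by fitting isotonic regression (via the PAVA) to $(r_1,y_1),\dots,(r_k,y_k),(r,Y)$, with the true test label $Y$ inserted at the test point. I will argue in three steps: that $S$ is a selector, that the calibration and test examples are exchangeable given the proper training set, and that the isotonic fit is constant on each of its level sets.

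First, $S$ is a selector. Isotonic regression on a fixed design is monotone in each label: raising the label attached to one point cannot lower the fitted value at any point. This follows from the min--max (or max--min) representation
\[
f(r_j)=\min_{t\ge j}\max_{s\le j}\operatorname{Av}(s,t)
\]
over blocks of the sorted design, since every block average is nondecreasing in each label. Because $C_*\le Y\le C^*$, we get $f_*(r)\le f_Y(r)\le f^*(r)$, so $S$ lies in the regression interval.

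Second, condition on the proper training set, so that $R$ is fixed. Given this, $Z_1,\dots,Z_k,(X,Y)$ are IID, and in particular exchangeable. Now consider the $k+1$ fitted values
\[
S_i:=f(r_i),\qquad f:=\text{the isotonic fit to the full bag }\{(r_1,y_1),\dots,(r_k,y_k),(r,Y)\}.
\]
This fit is a symmetric function of the bag, apart from the tie-breaking issue addressed below. The test value is $S=S_{k+1}$.

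Third, I use the key property of PAVA: the fitted function is constant on blocks, and its value on each block equals the average of the labels in that block. Hence, for every value $v$,
\[
\sum_i 1\{S_i=v\}\,y_i = v\,\#\{i:S_i=v\}.
\]

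Now fix the bag $B$, whose $\sigma$-algebra is symmetric in the $k+1$ examples. By exchangeability, given $B$ the test example is uniformly distributed over the $k+1$ positions, so
\[
\E(Y1\{S=v\}\mid B)=\frac{1}{k+1}\sum_i y_i1\{S_i=v\}=\frac{v}{k+1}\,\#\{i:S_i=v\}=v\,\mathbb{P}(S=v\mid B).
\]
Therefore $\E(Y\mid S,B)=S$. The tower property then gives $\E(Y\mid S)=S$, since $S$ is $\sigma(S,B)$-measurable. Finally I integrate out the proper training set; the conditional version already implies the unconditional one by the same tower argument, after enlarging the $\sigma$-algebra to include $R$.

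The main obstacle is ties. If several bag elements share the same base prediction $r_i$ but different positions, the fit must be defined on distinct design points, with tied $r$-values pooled. I would handle this by noting that the isotonic fit is a function of $r$ alone, so tied points receive equal fitted values. The block-average identity above holds for the level sets of $f$ as a whole, so the argument goes through. It remains to check that the value assigned to the test point, $f(r)$, depends only on the bag and on $r$, and not on which index is labelled ``test''; this is true for the PAVA applied to a multiset.
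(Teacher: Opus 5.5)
Your proposal is correct and follows the paper's own proof. It uses the same selector (the isotonic fit with the true test label inserted), the same monotonicity argument via the max--min formulas, and the same conditioning on the bag under uniformly random orderings, combined with the block-average property of PAVA. Your explicit level-set identity $\sum_i 1\{S_i=v\}y_i=v\,\#\{i:S_i=v\}$ is a slightly more careful version of the paper's remark that $S$ determines the solution block containing the test example, since it also covers adjacent blocks that happen to share a value.
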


Our interpretation of Theorem~\ref{thm:simple-validity}
is that the regression interval produced by the bounded IVAR
is approximately calibrated provided it is narrow enough.

Now we consider the case of unbounded regression
and continue to assume $2m<k$.
A \emph{selector} is defined as before,
and the \emph{Winsorized} test label is defined by
\begin{equation}\label{eq:Y'}
  Y'
  :=
  \begin{cases}
    Y_{(m)} & \text{if $Y<Y_{(m)}$}\\
    Y_{(k-m+1)} & \text{if $Y>Y_{(k-m+1)}$}\\
    Y & \text{otherwise,}
  \end{cases}
\end{equation}
where $Y_{(1)}\le\dots\le Y_{(k)}$ is the sequence $Y_1,\dots,Y_k$
of calibration labels sorted in the ascending order.
We can regard \eqref{eq:Y'} as a more feasible version of $Y$
corrected for the possibility of $Y$ being an outlier;
we make it easier to predict by restricting it
to be in the range of the calibration labels
(perhaps with a cushion).

\begin{remark}\upshape
  See \citet[Sect.~1]{Dixon:1960} and \citet[Sect.~14]{Tukey:1962}
  for the original definition of Winsorization,
  which we use in this paper.
  This definition is sometimes modified,
  as in, e.g., \citet[Sect.~2.2.2]{Wilcox:2012}.
  While the original definition is about moderating a dataset,
  the modified definition is about moderating the population.
\end{remark}

\begin{theorem}\label{thm:validity}
  We have $Y=Y'$ with probability at least $1-\frac{2m}{k+1}$.
  For any IVAR,
  there is a selector $S$ that is perfectly calibrated for the Winsorized test label $Y'$,
  i.e., $\E(Y'\mid S)=S$ a.s.
\end{theorem}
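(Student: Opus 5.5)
The plan is to follow the standard Venn--Abers argument. I will build a single "oracle" isotonic fit on the full bag of $k+1$ examples, namely the calibration examples together with the test example, with labels Winsorized symmetrically inside the bag. Two facts then need proving. First, the fitted value at $r$ lies between $\hat y_*$ and $\hat y^*$. Second, this fitted value is perfectly calibrated for $Y'$, by exchangeability. Throughout, I condition on the proper training set, so that $R$ is fixed and $Z_1,\dots,Z_k,(X,Y)$ are IID, hence exchangeable.

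\emph{Step 1: the probability bound.} We have $Y\ne Y'$ only if $Y<Y_{(m)}$ or $Y>Y_{(k-m+1)}$. The event $Y<Y_{(m)}$ means that at least $k-m+1$ calibration labels are strictly larger than $Y$. In the bag of $k+1$ labels, this says that the test label is strictly among the $m$ smallest. By exchangeability, breaking ties uniformly at random, this has probability at most $m/(k+1)$. The same bound holds for the upper tail, and a union bound gives $1-\frac{2m}{k+1}$.

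\emph{Step 2: the oracle selector and the sandwich.} Let $w_1,\dots,w_k,w$ be the labels of the bag after replacing its $m$ smallest labels by its $(m+1)$th smallest label and its $m$ largest labels by its $(m+1)$th largest label. I first check, by a case analysis on the rank of $Y$, that the test entry $w$ coincides with $Y'$ in \eqref{eq:Y'}. For example, if $Y<Y_{(m)}$, then $Y$ is among the $m$ smallest of the bag and the $(m+1)$th smallest bag label is $Y_{(m)}$; ties are handled as in \eqref{eq:y'}. I also note the extreme case: when the test label is at least $Y_{(k-m+1)}$, the bag Winsorization reproduces exactly the data of step~\ref{it:fit-1}, namely the labels $y'_i$ of step~\ref{it:f^*} together with $(r,y^*)$. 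Symmetrically, when the test label is at most $Y_{(m)}$, it reproduces the data of step~\ref{it:fit-2}.

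Let $f$ be the PAVA fit to $(r_1,w_1),\dots,(r_k,w_k),(r,w)$ and put $S:=f(r)$. For a general value of $Y$, I will show componentwise monotonicity in two parts:
\begin{itemize}
\item each Winsorized calibration label and the test entry are at most the corresponding labels of step~\ref{it:fit-1};
\item each is at least the corresponding labels of step~\ref{it:fit-2}.
\end{itemize}
This follows because raising one label of the bag weakly raises every order statistic, and hence every Winsorized value. Since isotonic regression is monotone in the label vector, it follows that $f_*(r)\le S\le f^*(r)$, so $S$ is a selector. I expect this step to be the main obstacle. The asymmetric definitions of $y_*$ and $y^*$ in steps~\ref{it:f^*} and~\ref{it:f_*}, and the tie cases, need careful bookkeeping to confirm that the extremes match exactly and that the intermediate cases are sandwiched.

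\emph{Step 3: calibration.} Condition on the bag $B$, the multiset of the $k+1$ examples. Given $B$, the test example is uniformly distributed over the $k+1$ positions. The vector of Winsorized labels and the isotonic fit $f$ depend only on $B$, as functions of the pairs $(R(x_i),y_i)$, so they do not depend on which element is the test one. A basic property of PAVA is that on each block of the fit with value $c$, the average of the fitted labels $w$ equals $c$. Hence, given $B$ and $S=c$, the test example is uniform over the bag elements whose fitted value is $c$, and therefore
\begin{equation*}
\E(Y'\mid B,S=c)=c.
\end{equation*}
Taking conditional expectations, first given $(B,S)$ and then given $S$ alone, and then integrating over the proper training set, yields $\E(Y'\mid S)=S$ almost surely.
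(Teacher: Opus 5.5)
Your proposal is correct and follows essentially the same route as the paper's proof. It uses the same bag-level (symmetric) Winsorization to define the selector, the same sandwich argument via the two extreme cases plus the monotonicity lemma for isotonic regression, and the same exchangeability argument that conditions on the bag and uses the fact that the isotonic fit at $r$ is the mean of the Winsorized labels in its solution block. Your order-statistic monotonicity argument and the explicit rank bound for the event $Y\ne Y'$ usefully spell out steps that the paper leaves as ``otherwise'' and ``obvious''.
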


The obvious first statement of Theorem~\ref{thm:validity}
says that the Winsorized label $Y'$ is the same as the original label $Y$
with high probability assuming $m\ll k$.
The second statement asserts the validity of the IVAR
as a regression function for the Winsorized label.
See Appendix~\ref{app:proofs} for the proofs.
They will make clear how the symmetry in the definition \eqref{eq:Y'}
of the Winsorized test label
(involving the $m$th smallest and $m$th largest calibration label)
leads to the asymmetry in the definition of IVARs in Sect.~\ref{subsec:unbounded}.

An alternative parametrization of the IVAR
is in terms of $\epsilon:=2m/(k+1)$;
we then have $Y=Y'$ with probability at least $1-\epsilon$.
If a given $\epsilon\in[2/(k+1),1)$ is not of the form $2m/(k+1)$
for an integer $m$,
we decrease it as little as possible so that it takes this form.

\section{Algorithm}
\label{sec:algorithm}

The procedure described in Sect.~\ref{subsec:unbounded}
can be implemented very efficiently.
In our description of the algorithm,
we will just refer to \citet[Chap.~6, especially Sect.~6.5.3]{Vovk/etal:2022book},
which in turn follows \citet{Vovk/etal:2015-local}.
We will discuss computing $f^*$ and $f_*$,
which are denoted by $f^1$ and $f^0$, respectively,
in \citet[Sect.~6.5.3]{Vovk/etal:2022book}.
Remember that we moderate the original calibration labels $y_i$
to their less extreme versions $y'_i$ by \eqref{eq:y'};
after that, we forget the original labels and drop the primes.
But it should always be remembered that $y_i$ stand for the moderated labels.

First we see how to compute $F^1$, which determines $f^1=f^*$.
We are given the base predictions $r_i$ and the moderated labels $y_1,\dots,y_k$.
Define $k'$, $r'_1,\dots,r'_{k'}$, $w_1,\dots,w_{k'}$,
and $y'_1,\dots,y'_{k'}$ as in \citet[Sect.~6.5.3]{Vovk/etal:2022book}
(where base predictions were called scores and were denoted by $s_i$).
The CSD consisting of the points $P_i$, $i\in\{0,\dots,k'\}$, is defined as before, 
but now it is extended by adding the point $P_{-1}:=(-1,-y^*)$;
this corresponds to adding the test example to the calibration set
assuming that the base prediction for it is smaller
than the base prediction for any calibration example
while its label is $y^*$ (which is a most unusual combination).
Algorithm~6.3 in \citet[Sect.~6.5.3]{Vovk/etal:2022book}
will compute the corners of the resulting CSD $P_{-1},\dots,P_{k'}$
and Algorithm~6.4 will do the rest.

Computing $F^0$, which determines $f^0=f_*$, is analogous.
Now the CSD consisting of the points $P_i$, $i\in\{0,\dots,k'\}$,
is extended by adding the point $P_{k'+1}:=P_{k'}+(1,y_*)$;
this corresponds to adding the test example to the calibration set
assuming that its base prediction is larger than any calibration base prediction
while its label is $y_*$ (another most unusual combination).
Algorithm~6.5 in \citet[Sect.~6.5.3]{Vovk/etal:2022book}
will compute the corners of the resulting CSD $P_{0},\dots,P_{k'+1}$
and Algorithm~6.6 will do the rest.

Algorithm~6.7 in \citet[Sect.~6.5.3]{Vovk/etal:2022book}
shows how to arrange the calibration set
into a convenient binary search tree,
and Algorithm~6.8 shows how to use this tree
for computationally efficient prediction.
The computational complexity of these procedures
is summarized in the following theorem
(which ignores the complexity of training the base regressor and computing base predictions).

\begin{theorem}
  The computation time of our prediction algorithm is $O(k\log k)$ for preprocessing
  ($O(k)$ apart from sorting the calibration set).
  Once preprocessing is completed, processing each test object can be done in time $O(\log k)$.
\end{theorem}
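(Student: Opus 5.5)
The bounds follow by tracing the procedure of \citet[Sect.~6.5.3]{Vovk/etal:2022book} step by step. The one new point is that the moderation of labels by \eqref{eq:y'} happens entirely in preprocessing. In step~\ref{it:f^*} and in step~\ref{it:f_*}, the thresholds $y_*$ and $y^*$ are order statistics of the calibration labels only. They do not depend on the test object. So the two moderated label vectors $(y'_i)$, one for $f^*$ and one for $f_*$, are computed once, before any test object arrives. Each of the four thresholds is an order statistic and can be found in $O(k)$ time by linear-time selection, or read off after sorting the labels in $O(k\log k)$. Clipping the labels is then a single $O(k)$ pass.

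Next I account for the rest of the preprocessing, handling the two moderated label vectors in parallel.
\begin{enumerate}
\item Sort the calibration examples by base prediction $r_i$. This costs $O(k\log k)$ and is the only superlinear step.
\item Merge ties to obtain $k'$, the distinct values $r'_j$, the weights $w_j$ and the averaged labels. This is one $O(k)$ pass over the sorted list.
\item Form the cumulative sum diagram $P_0,\dots,P_{k'}$ by prefix sums in $O(k)$. Add the extra point $P_{-1}=(-1,-y^*)$ for $f^*$, or $P_{k'+1}=P_{k'}+(1,y_*)$ for $f_*$.
\item Run Algorithms~6.3 and~6.5 to compute the corners of the greatest convex minorants. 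These are stack-based scans in which each point is pushed and popped at most once, so they take $O(k)$ amortized time.
\item Run Algorithms~6.4 and~6.6, which sweep the diagram and record, for every possible position of the test prediction $r$, the slope that gives $f^*(r)$ and $f_*(r)$. By the same amortization argument as in the book, these take $O(k)$ time.
\item Run Algorithm~6.7, which builds a balanced binary search tree over the already sorted $r'_j$ in $O(k)$.
\end{enumerate}
Adding up, preprocessing is $O(k\log k)$ overall and $O(k)$ apart from the sort.

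For each test object, $f^*(r)$ and $f_*(r)$ depend on $r$ only through its position relative to $r'_1<\dots<r'_{k'}$. The possibilities are: $r$ lies strictly between two consecutive values, $r$ lies outside their range, or $r$ equals some $r'_j$. That gives $O(k)$ cases in total, and both values were tabulated for every case during preprocessing. Algorithm~6.8 finds the case for $r$ by searching the tree in $O(\log k)$ and then reads off the two stored values in $O(1)$.

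The step I expect to need the most care is checking that the book's algorithms remain correct with general real labels and the modified extra point. In the binary setting the extra test point has label $1$ or $0$; here it has label $y^*$ or $y_*$. Algorithms~6.3--6.6 use the labels only through the coordinates of the cumulative sum diagram and slopes of its segments. So I would argue that replacing $1$ by $y^*$ and $0$ by $y_*$ changes only those coordinates and leaves the convex-minorant computations and their amortized cost unchanged. For this argument I would first check directly that the greatest convex minorant of the extended diagram is still obtained by the same stack-based updates when the extra point has an arbitrary real label.
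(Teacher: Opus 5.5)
Your overall route is the paper's. The moderation \eqref{eq:y'} depends only on calibration labels, so it goes into preprocessing. The book's Algorithms~6.3--6.8 are then run on the diagram extended by $P_{-1}=(-1,-y^*)$ or $P_{k'+1}=P_{k'}+(1,y_*)$, and your cost accounting is fine. The gap is in the step you flagged yourself. You argue that the book's procedures carry over because they use the labels ``only through the coordinates'', and you plan to check this for an extra point with an \emph{arbitrary} real label. That inference is valid for the Graham-type scans (Algorithms~6.3 and~6.5), which compute the greatest convex minorant of any $x$-sorted point set. It is not valid for the sweeps (Algorithms~6.4 and~6.6). Nor is it valid for reading off $f^*(r)$ and $f_*(r)$ when $r$ lies strictly between consecutive $r'_j$. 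These steps are correct only under a geometric condition on the coordinates, and an arbitrary test label can violate it.

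Moving the test example past group $j$ swaps two consecutive increments of the diagram, $(1,y^*)$ and $(w_j,w_jy'_j)$, and only the vertex between them moves.
\begin{itemize}
\item When $y^*\ge y'_j$, this vertex passes from on or above the chord of its neighbours, where it cannot be a corner, to on or below it. So the minorant only decreases during the sweep, a point once above it stays above, and each vertex is pushed and popped at most once. This is what makes the stack update correct and yields the $O(k)$ amortized bound.
\item The same inequality gives $f^*(r)=f^*(r'_{j+1})$ for $r'_j<r<r'_{j+1}$ and $f^*(r)=y^*$ for $r>r'_{k'}$. So placing the test just before group $j+1$ and tying it with that group give the same value, which is what the sweep and the lookup rely on.
\item If some calibration label exceeded the test label, the moving vertex could rise and previously popped corners would re-enter the minorant. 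Then both correctness and the amortization fail.
\end{itemize}
The missing observation is that \eqref{eq:y'} supplies exactly the needed extremality. In step~\ref{it:fit-1} every moderated label is at most $y^*$, and in step~\ref{it:fit-2} every moderated label is at least $y_*$. This is the analogue of the binary test labels $1$ and $0$, and it is what the paper's ``most unusual combination'' remark alludes to. Clipping to the same threshold that serves as the test label is therefore not merely a preprocessing cost; it is the hypothesis under which the book's invariants hold. Verify them under that hypothesis rather than for arbitrary labels, and your argument goes through.
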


\section{Merging a Regression Interval into a Single Value}
\label{sec:merging}

The IVAR introduced in Sect.~\ref{sec:IVAR}
achieves our goal of producing provably valid regression intervals
that have a potential to be predictively efficient
for efficient base algorithms.
However, in order to be able to compare the predictive efficiency
of our methods with traditional regression algorithms,
in this and following section we will define natural modifications
of the IVAR that produce point predictions.
In this section we see how to replace the regression intervals
output by IVARs by point predictions,
and in the following section we will see how we can combine
several IVARs to achieve both predictive and computational efficiency.

Given $y_*<\hat y_*<\hat y^*<y^*$,
let us see how to replace the regression interval $[\hat y_*,\hat y^*]$
with a single regression value $\hat y$.
Following the minimax approach of \citet[Sect.~6.4.3]{Vovk/etal:2022book},
we need to solve the equation
\begin{equation}\label{eq:minmax}
  \left(
    \hat y - y_*
  \right)^2
  -
  \left(
    \hat y_* - y_*
  \right)^2
  =
  \left(
    y^* - \hat y
  \right)^2
  -
  \left(
    y^* - \hat y^*
  \right)^2.
\end{equation}
It is clear that there is a unique solution $\hat y$ in the interval $[\hat y_*,\hat y^*]$,
since, over that interval,
the left-hand side of \eqref{eq:minmax} increases in $\hat y$ from 0 to a positive number,
while the right-hand side decreases from a positive number to 0.

After simplification \eqref{eq:minmax} becomes a linear equation in $\hat y$
with solution
\begin{equation}\label{eq:solution}
  \hat y
  =
  \frac
  {\hat y_*^2-\hat y^{*2}+2\hat y^*y^*-2\hat y_*y_*}
  {2(y^*-y_*)},
\end{equation}
and there is only one solution
(both in the interval $[\hat y_*,\hat y^*]$ and overall).

We can also solve \eqref{eq:minmax} approximately
obtaining a much more intuitive expression.
Rewriting \eqref{eq:minmax} as
\begin{equation*}
  2(\hat y-\hat y_*)(\hat y_*-y_*)+(\hat y-\hat y_*)^2
  =
  2(\hat y^*-\hat y)(y^*-\hat y^*)+(\hat y^*-\hat y)^2,
\end{equation*}
assuming $\hat y_*\approx\hat y^*$, and ignoring the quadratic terms,
we obtain the approximate equation
\[
  (\hat y-\hat y_*)(\hat y_*-y_*)
  =
  (\hat y^*-\hat y)(y^*-\hat y^*).
\]
Regrouping its terms as
\begin{equation*}
  \hat y(\hat y_*-y_*+y^*-\hat y^*)
  =
  \hat y_*(\hat y_*-y_*)
  +
  \hat y^*(y^*-\hat y^*),
\end{equation*}
we can write its solution as the weighted average
\begin{equation}\label{eq:approx}
  \hat y
  =
  \frac{\hat y_*-y_*}{\hat y_*-y_*+y^*-\hat y^*}
  \hat y_*
  +
  \frac{y^*-\hat y^*}{\hat y_*-y_*+y^*-\hat y^*}
  \hat y^*
\end{equation}
of $\hat y_*$ and $\hat y^*$.

\section{Cross Venn--Abers Regressors}
\label{sec:CVAR}

We define \emph{cross Venn--Abers regressors} (CVARs)
as in \citet[Sect.~6.4.4]{Vovk/etal:2022book}
except that the regression interval coming from each fold
is replaced by the regression estimate computed
as described in Sect.~\ref{sec:merging},
either as in \eqref{eq:solution} or as in \eqref{eq:approx}
(in our experiments in the next section we use the latter,
and we set $y_*:=y_{(m)}$ and $y^*:=y_{(k-m+1)}$,
according to \eqref{eq:Y'}).
Namely, we divide the training set into $K$ folds
of approximately equal size,
and use each fold in turn as the calibration set
while the union of the remaining folds is used as the proper training set.
The overall regression estimate is found as the arithmetic mean
of the $K$ regression estimates obtained by merging the regression intervals
coming from the $K$ IVARs corresponding to the $K$ folds.
In our experiments we set the parameter $K$ to 10.

\section{Experimental Results}
\label{sec:experiments}

As discussed at the end of Sect.~\ref{sec:literature},
in our experimental studies we concentrate on our point regressors,
namely the CVARs as defined in the previous section.
These point regressors no longer satisfy any formal properties of validity,
but the hope is that the validity properties of IVARs
will manifest themselves in superior performance of CVARs,
measured in the traditional way,
as compared with standard point regressors.

We evaluate our approach on a suite of controlled synthetic regression benchmarks
designed to isolate different statistical challenges.
We generate datasets with $n=10000$ examples
under the following generative scenarios.
Each dataset consists of $n$ examples
whose objects are vectors of $d=10$ features.
As before, the objects are denoted by $x_i$ and the corresponding labels by $y_i$,
$i=1,\dots,n$.
Across all datasets, the level of noise in the generated labels
is parametrized by $\sigma\in\{1,3\}$.
The results in this section represent the more challenging noise level $\sigma=3$,
and results for $\sigma=1$,
as well as for $n=1000$,
are included in Appendix~\ref{app:extra}.
Unless stated otherwise, all random draws are independent.

\paragraph{Bounded logistic dataset}
Each object is generated as $x_i \sim \NNN(0,I_{10})$,
the weight vector as $w \sim \NNN(0,I_{10})$,
and the Gaussian noise variables as $\xi_i \sim \NNN(0,\sigma^2)$.
(We parametrize the Gaussian distribution $\NNN$ by its mean and variance,
or its covariance matrix in the multidimensional case.)
The labels are then generated using the sigmoid function:
\[
  y_i
  :=
  \frac{10}{1 + \e^{-w^\top x_i}}
  +
  \xi_i.
\]
Notice that while the conditional expectation of $y_i$ is bounded
(belongs to $(0,10)$),
the labels are not bounded because of the Gaussian noise.
Since $y_i$ are ``almost bounded''
(the Gaussian distribution having thin tails),
this is the most benign case,
close to the setting of Sect.~\ref{subsec:bounded},
and our methods work best for it.

\paragraph{Linear Gaussian dataset}
We generate $x_i \sim \NNN(0,I_{10})$, $w \sim \NNN(0,I_{10})$,
and $\xi_i \sim \NNN(0,\sigma^2)$ as before.
The labels are then defined by
\[
  y_i := w^\top x_i + \xi_i.
\]

\paragraph{Nonlinear dataset}
Again $x_i \sim \NNN(0,I_{10})$ and $w \sim \NNN(0,I_{10})$.
The label of $x_i$ combines a linear contribution with smooth nonlinear components:
\[
  y_i := w^\top x_i
  + 2\sin(x_{i,1})
  + \tfrac{1}{2} x_{i,2}^2
  - \cos(2x_{i,3})
  + \xi_i,
\]
where $\xi_i \sim \NNN(0,\sigma^2)$ and $x_{i,j}$ denotes the $j$th feature of $x_i$.

\paragraph{Heteroscedastic noise dataset}
We again draw $x_i \sim \NNN(0,I_{10})$ and
$w \sim \NNN(0,I_{10})$ independently. However, the observation-noise
scale now depends on the object. Specifically,
\[
  y_i := w^\top x_i + \xi_i,
  \qquad
  \xi_i \sim \NNN\!\bigl(0,\sigma_i^2\bigr),
\]
where
\[
  \sigma_i
  :=
  \left(
    0.5+\left|x_{i,1}\right|
  \right)
  \sigma.
\]
Thus, $\sigma$ controls the overall noise level,
while the magnitude of the first feature determines the object-dependent noise scale.
This creates heteroscedastic, object-dependent uncertainty.

\paragraph{Heavy-tailed noise dataset}
Here $x_i \sim \NNN(0,I_{10})$ and $w \sim \NNN(0,I_{10})$ are
independently generated as before, but the additive noise follows a scaled
Student-$t$ distribution with three degrees of freedom:
\[
  y_i := w^\top x_i + \xi_i,
  \qquad
  \xi_i \sim \sigma t_3.
\]
Equivalently, $\xi_i := \sigma\varepsilon_i$,
where $\varepsilon_i \sim t_3$.
Thus, $\sigma$ controls the scale of the noise,
while the degrees of freedom remain fixed at three.
This produces infrequent but potentially very large deviations.

\paragraph{Outlier contamination dataset}
We generate $x_i \sim \NNN(0,I_{10})$ and $w \sim \NNN(0,I_{10})$,
and define
\[
  y_i := w^\top x_i + \xi_i,
\]
where $\xi_i$ follows a contamination model:
\[
  \xi_i
  \sim
  \begin{cases}
    \NNN(0,\sigma^2), & \text{with probability } 1-p,\\[4pt]
    \NNN(0,\tau^2), & \text{with probability } p,
  \end{cases}
\]
with $p:=0.01$ representing the outlier probability
and $\tau\gg1$ (we set $\tau:=10\sigma$) meaning
that the rare errors are extremely large.

\paragraph{Sparse signals dataset}
We again draw $x_i \sim \NNN(0,I_{10})$,
but the true vector of coefficients is sparse.
Let $s$ be the sparsity level (in our experiments we set $s:=1$);
we randomly select a support set $S\subset\{1,\dots,10\}$ with $\left|S\right| = s$
and define
\[
  w_j
  \sim
  \begin{cases}
    \NNN(0,1), & j \in S,\\
    0, & j \notin S.
  \end{cases}
\]
The labels follow
\[
  y_i := w^\top x_i + \xi_i,
  \qquad
  \xi_i \sim \NNN(0,\sigma^2).
\]

\paragraph{Covariate shift dataset}
This scenario uses a distribution-defined train--test split rather than
randomly partitioning a common IID sample. Of the $n$ observations, $80\%$
are independently generated as training objects from
\[
  x_i^{(\mathrm{train})}\sim\NNN(0,I_{10}),
\]
while the remaining $20\%$ are independently generated as test objects from
\[
  x_i^{(\mathrm{test})}\sim\NNN(\mathbf{1},I_{10}),
\]
where $\mathbf{1}$ denotes the $10$-dimensional vector of ones. A single
coefficient vector $w\sim\NNN(0,I_{10})$ generates the labels in both
splits:
\[
  y_i^{(s)} := w^\top x_i^{(s)}+\xi_i^{(s)},
  \qquad
  \xi_i^{(s)}\sim\NNN(0,\sigma^2),
  \qquad
  s\in\{\mathrm{train},\mathrm{test}\}.
\]
The features in both splits are subsequently standardized using the means
and standard deviations estimated from the training objects. Thus, the
conditional distribution $p(y\mid x)$ is unchanged across the two splits,
whereas the marginal object distribution $p(x)$ differs. Because the
training and test observations are not identically distributed, the IID
assumptions underlying our validity result do not hold in this scenario.
We therefore include it only as a robustness stress test under distribution
shift.

\paragraph{Friedman's datasets}
We also show results for Friedman's three synthetic datasets \citep{Friedman:1991}
as numbered by Breiman \citep[Sect.~3 and Appendix~B]{Breiman:1996}
and implemented in \texttt{scikit-learn}.
We use the \texttt{scikit-learn} implementation
with the default values of parameters except for the size
(namely, the default number of features is 10 for Friedman 1) and noise level $\sigma$.

\medskip

For each dataset we randomly split the data into 80\% training and 20\% testing,
standardize the features based on training statistics,
and repeat all experiments across multiple (namely, 100) random seeds to reduce variance.
(The covariate shift dataset is an exception in that splits are not random.)
At the end of the section we also briefly discuss results
for the other combinations of $n\in\{1000,10000\}$ and $\sigma\in\{1,3\}$.

We compare our methods against seven widely used regression baselines
implemented in \texttt{scikit-learn},
including linear regressors (Linear Regression, Ridge, Lasso, Elastic Net),
kernel-based methods (Support Vector Regression, SVR, with the RBF kernel),
and tree-based ensembles (Random Forest and Gradient Boosting).
Linear Regression in fact implements Least Squares,
and Ridge (or Ridge Regression), Lasso, and Elastic Net
add various elements of regularization.
All models are trained with default or widely adopted hyperparameters
to reflect typical practitioner usage.
Performance is measured using root mean squared error (RMSE) on the held-out test set.
We report mean RMSE across repeated trials for each scenario to assess their accuracy.

The overall procedure can be summarized as follows.
For each dataset and random seed,
we first split the data into a training portion (80\%) and a held-out test portion (20\%).
All model fitting, calibration, and cross-fitting are performed only on the training portion.
Final metrics (such as RMSE) are evaluated only on the held-out test portion.
The uncalibrated baseline labelled ``none'' in our tables
is trained once on the full training portion.
The CVAR methods use the same training portion but internally perform $K = 10$ fold cross-fitting:
each fold is used as a calibration set while the remaining folds are used as proper training data.
The fold-specific IVAR intervals are merged into point predictions
as described in Sect.~\ref{sec:merging} and averaged across folds.

Tables~\ref{tab:BL10K3}--\ref{tab:F3_10K3} show the average RMSE
over 100 trials for each synthetic dataset
with $\sigma=3$ and $n=10000$ examples
and for each base algorithm without calibration (\emph{none})
or calibrated using our CVAR method with 10 folds and parameters $m\in\{1,10\}$
(denoted \emph{CVAR1} and \emph{CVAR10}, respectively, in the tables;
using $m=100$ never leads to improvements as compared with the smaller $m$
in our experiments).
Apart from the average RMSE we also report the standard error of the mean (SEM)
obtained by dividing the sample standard deviation of the RMSE over the 100 trials
by $\sqrt{100}=10$; therefore, each cell in our tables has the form
\[
  \text{average RMSE} \pm \text{SEM}.
\]
These intervals reassure us that most of our comparisons
are not unduly affected by randomness,
but in our summaries we only use the averages.
The smallest average in each row of our tables is shown in boldface.
For each table we also report the average of each column,
which is the average of the average RMSE over the seven regression algorithms.

\begin{remark}\upshape
  The results for Linear Regression and Ridge coincide
  in Tables~\ref{tab:BL10K3}--\ref{tab:F3_10K3},
  but they are sometimes slightly different in Appendix~\ref{app:extra}.
  The reason for this closeness is that our features are constructed as independent,
  so multicollinearity is unlikely,
  and the sample size is large relative to the feature count;
  this makes Least Squares stable,
  and so the default ridge penalty has relatively little impact.
\end{remark}

The last rows in Tables~\ref{tab:BL10K3}--\ref{tab:F3_10K3}
show that both of our methods, CVAR1 and CVAR10, attain better RMSE scores on average
when compared with uncalibrated algorithms,
especially for the bounded logistic, linear Gaussian, nonlinear, covariate shift,
Friedman 1, and Friedman 2 datasets
(the results for the last dataset, however, are very irregular,
and in two cases our methods significantly lower the quality of predictions).
In many rows the base algorithm performs better,
but in a typical row of a typical table either our algorithms perform better
or they lose little.
For the ``almost bounded'' bounded logistic dataset our methods
improve the base predictions in all rows.

\begin{table}
  \caption{Bounded logistic dataset}\label{tab:BL10K3}
  \medskip
  \begin{center}
  \begin{tabular}{lccc}
    \toprule
    & none & CVAR1 & CVAR10 \\
    \midrule
    Elastic Net & $3.809 \pm 0.015$ & $\textbf{3.204} \pm 0.007$ & $3.205 \pm 0.007$ \\
    Gradient Boosting & $3.239 \pm 0.010$ & $\textbf{3.123} \pm 0.006$ & $3.125 \pm 0.006$ \\
    Lasso & $4.001 \pm 0.018$ & $\textbf{3.587} \pm 0.017$ & $3.588 \pm 0.017$ \\
    Linear Regression & $3.257 \pm 0.012$ & $\textbf{3.010} \pm 0.005$ & $3.011 \pm 0.005$ \\
    Random Forest & $3.231 \pm 0.008$ & $\textbf{3.182} \pm 0.007$ & $3.184 \pm 0.007$ \\
    Ridge & $3.257 \pm 0.012$ & $\textbf{3.010} \pm 0.005$ & $3.011 \pm 0.005$ \\
    SVR (RBF) & $3.143 \pm 0.007$ & $\textbf{3.065} \pm 0.005$ & $3.066 \pm 0.005$ \\
    \midrule
    average & $3.420$ & $\textbf{3.169}$ & $3.170$ \\
    \bottomrule
  \end{tabular}
  \end{center}
\end{table}

\begin{table}
  \caption{Linear Gaussian dataset}\label{tab:LG10K3}
  \medskip
  \begin{center}
  \begin{tabular}{lccc}
    \toprule
    & none & CVAR1 & CVAR10 \\
    \midrule
    Elastic Net & $3.510 \pm 0.015$ & $\textbf{3.147} \pm 0.006$ & $3.155 \pm 0.006$ \\
    Gradient Boosting & $3.073 \pm 0.006$ & $\textbf{3.071} \pm 0.006$ & $3.080 \pm 0.006$ \\
    Lasso & $3.707 \pm 0.018$ & $\textbf{3.418} \pm 0.014$ & $3.424 \pm 0.014$ \\
    Linear Regression & $\textbf{3.002} \pm 0.005$ & $3.019 \pm 0.005$ & $3.029 \pm 0.005$ \\
    Random Forest & $3.138 \pm 0.006$ & $\textbf{3.127} \pm 0.007$ & $3.136 \pm 0.007$ \\
    Ridge & $\textbf{3.002} \pm 0.005$ & $3.019 \pm 0.005$ & $3.029 \pm 0.005$ \\
    SVR (RBF) & $\textbf{3.087} \pm 0.005$ & $3.096 \pm 0.006$ & $3.104 \pm 0.006$ \\
    \midrule
    average & $3.217$ & $\textbf{3.128}$ & $3.137$ \\
    \bottomrule
  \end{tabular}
  \end{center}
\end{table}

\begin{table}
  \caption{Nonlinear dataset}\label{tab:NL10K3}
  \medskip
  \begin{center}
  \begin{tabular}{lccc}
    \toprule
    & none & CVAR1 & CVAR10 \\
    \midrule
    Elastic Net & $3.718 \pm 0.015$ & $\textbf{3.327} \pm 0.006$ & $3.335 \pm 0.006$ \\
    Gradient Boosting & $3.100 \pm 0.006$ & $\textbf{3.098} \pm 0.006$ & $3.110 \pm 0.006$ \\
    Lasso & $3.904 \pm 0.018$ & $\textbf{3.590} \pm 0.013$ & $3.595 \pm 0.014$ \\
    Linear Regression & $\textbf{3.199} \pm 0.005$ & $3.211 \pm 0.006$ & $3.221 \pm 0.006$ \\
    Random Forest & $3.194 \pm 0.008$ & $\textbf{3.185} \pm 0.008$ & $3.196 \pm 0.008$ \\
    Ridge & $\textbf{3.199} \pm 0.005$ & $3.211 \pm 0.006$ & $3.221 \pm 0.006$ \\
    SVR (RBF) & $\textbf{3.139} \pm 0.006$ & $3.153 \pm 0.006$ & $3.162 \pm 0.006$ \\
    \midrule
    average & $3.350$ & $\textbf{3.254}$ & $3.263$ \\
    \bottomrule
  \end{tabular}
  \end{center}
\end{table}

\begin{table}
  \caption{Heteroscedastic noise dataset}\label{tab:HSN10K3}
  \medskip
  \begin{center}
  \begin{tabular}{lccc}
    \toprule
    & none & CVAR1 & CVAR10 \\
    \midrule
    Elastic Net & $4.654 \pm 0.016$ & $\textbf{4.386} \pm 0.011$ & $4.392 \pm 0.011$ \\
    Gradient Boosting & $4.371 \pm 0.012$ & $\textbf{4.351} \pm 0.012$ & $4.359 \pm 0.012$ \\
    Lasso & $4.804 \pm 0.017$ & $\textbf{4.586} \pm 0.015$ & $4.589 \pm 0.015$ \\
    Linear Regression & $\textbf{4.289} \pm 0.011$ & $4.295 \pm 0.011$ & $4.302 \pm 0.011$ \\
    Random Forest & $4.430 \pm 0.012$ & $\textbf{4.400} \pm 0.012$ & $4.407 \pm 0.012$ \\
    Ridge & $\textbf{4.289} \pm 0.011$ & $4.295 \pm 0.011$ & $4.302 \pm 0.011$ \\
    SVR (RBF) & $\textbf{4.358} \pm 0.011$ & $4.361 \pm 0.011$ & $4.366 \pm 0.012$ \\
    \midrule
    average & $4.456$ & $\textbf{4.382}$ & $4.388$ \\
    \bottomrule
  \end{tabular}
  \end{center}
\end{table}

\begin{table}
  \caption{Heavy-tailed noise dataset}\label{tab:HTN10K3}
  \medskip
  \begin{center}
  \begin{tabular}{lccc}
    \toprule
    & none & CVAR1 & CVAR10 \\
    \midrule
    Elastic Net & $5.444 \pm 0.076$ & $5.216 \pm 0.077$ & $\textbf{5.210} \pm 0.077$ \\
    Gradient Boosting & $5.207 \pm 0.077$ & $\textbf{5.185} \pm 0.077$ & $5.187 \pm 0.077$ \\
    Lasso & $5.578 \pm 0.075$ & $5.399 \pm 0.075$ & $\textbf{5.388} \pm 0.075$ \\
    Linear Regression & $\textbf{5.120} \pm 0.078$ & $5.135 \pm 0.078$ & $5.132 \pm 0.078$ \\
    Random Forest & $5.280 \pm 0.077$ & $\textbf{5.237} \pm 0.077$ & $5.238 \pm 0.077$ \\
    Ridge & $\textbf{5.120} \pm 0.078$ & $5.135 \pm 0.078$ & $5.132 \pm 0.078$ \\
    SVR (RBF) & $\textbf{5.173} \pm 0.077$ & $5.189 \pm 0.077$ & $5.181 \pm 0.077$ \\
    \midrule
    average & $5.275$ & $5.214$ & $\textbf{5.210}$ \\
    \bottomrule
  \end{tabular}
  \end{center}
\end{table}

\begin{table}
  \caption{Outlier contamination dataset}\label{tab:OC10K3}
  \medskip
  \begin{center}
  \begin{tabular}{lccc}
    \toprule
    & none & CVAR1 & CVAR10 \\
    \midrule
    Elastic Net & $4.674 \pm 0.040$ & $\textbf{4.397} \pm 0.042$ & $4.399 \pm 0.042$ \\
    Gradient Boosting & $4.389 \pm 0.042$ & $\textbf{4.361} \pm 0.042$ & $4.370 \pm 0.042$ \\
    Lasso & $4.826 \pm 0.039$ & $4.610 \pm 0.040$ & $\textbf{4.604} \pm 0.040$ \\
    Linear Regression & $\textbf{4.290} \pm 0.043$ & $4.307 \pm 0.043$ & $4.311 \pm 0.043$ \\
    Random Forest & $4.456 \pm 0.042$ & $\textbf{4.412} \pm 0.042$ & $4.419 \pm 0.042$ \\
    Ridge & $\textbf{4.290} \pm 0.043$ & $4.307 \pm 0.043$ & $4.311 \pm 0.043$ \\
    SVR (RBF) & $\textbf{4.353} \pm 0.042$ & $4.369 \pm 0.042$ & $4.366 \pm 0.042$ \\
    \midrule
    average & $4.468$ & $\textbf{4.395}$ & $4.397$ \\
    \bottomrule
  \end{tabular}
  \end{center}
\end{table}

\begin{table}
  \caption{Sparse signals dataset}\label{tab:SHDS10K3}
  \medskip
  \begin{center}
  \begin{tabular}{lccc}
    \toprule
    & none & CVAR1 & CVAR10 \\
    \midrule
    Elastic Net & $3.049 \pm 0.007$ & $\textbf{2.998} \pm 0.005$ & $2.998 \pm 0.005$ \\
    Gradient Boosting & $3.009 \pm 0.005$ & $\textbf{3.004} \pm 0.005$ & $3.005 \pm 0.005$ \\
    Lasso & $3.079 \pm 0.009$ & $\textbf{3.024} \pm 0.006$ & $3.025 \pm 0.006$ \\
    Linear Regression & $\textbf{2.995} \pm 0.005$ & $2.997 \pm 0.005$ & $2.997 \pm 0.005$ \\
    Random Forest & $3.046 \pm 0.005$ & $\textbf{3.018} \pm 0.005$ & $3.019 \pm 0.005$ \\
    Ridge & $\textbf{2.995} \pm 0.005$ & $2.997 \pm 0.005$ & $2.997 \pm 0.005$ \\
    SVR (RBF) & $3.038 \pm 0.005$ & $3.022 \pm 0.005$ & $\textbf{3.022} \pm 0.005$ \\
    \midrule
    average & $3.030$ & $\textbf{3.009}$ & $3.009$ \\
    \bottomrule
  \end{tabular}
  \end{center}
\end{table}

\begin{table}
  \caption{Covariate shift dataset}\label{tab:CS10K3}
  \medskip
  \begin{center}
  \begin{tabular}{lccc}
    \toprule
    & none & CVAR1 & CVAR10 \\
    \midrule
    Elastic Net & $3.887 \pm 0.050$ & $\textbf{3.356} \pm 0.023$ & $3.380 \pm 0.025$ \\
    Gradient Boosting & $\textbf{3.213} \pm 0.011$ & $3.251 \pm 0.020$ & $3.281 \pm 0.023$ \\
    Lasso & $4.191 \pm 0.065$ & $\textbf{3.814} \pm 0.044$ & $3.827 \pm 0.044$ \\
    Linear Regression & $\textbf{3.007} \pm 0.004$ & $3.112 \pm 0.018$ & $3.143 \pm 0.021$ \\
    Random Forest & $\textbf{3.371} \pm 0.019$ & $3.395 \pm 0.024$ & $3.423 \pm 0.027$ \\
    Ridge & $\textbf{3.007} \pm 0.004$ & $3.112 \pm 0.018$ & $3.143 \pm 0.021$ \\
    SVR (RBF) & $\textbf{3.600} \pm 0.043$ & $3.652 \pm 0.046$ & $3.669 \pm 0.048$ \\
    \midrule
    average & $3.468$ & $\textbf{3.384}$ & $3.410$ \\
    \bottomrule
  \end{tabular}
  \end{center}
\end{table}

\begin{table}
  \caption{Friedman 1 dataset}\label{tab:F1_10K3}
  \medskip
  \begin{center}
  \begin{tabular}{lccc}
    \toprule
    & none & CVAR1 & CVAR10 \\
    \midrule
    Elastic Net & $4.383 \pm 0.008$ & $\textbf{3.871} \pm 0.007$ & $3.879 \pm 0.007$ \\
    Gradient Boosting & $\textbf{3.139} \pm 0.006$ & $3.158 \pm 0.006$ & $3.176 \pm 0.006$ \\
    Lasso & $4.353 \pm 0.008$ & $\textbf{3.959} \pm 0.007$ & $3.967 \pm 0.007$ \\
    Linear Regression & $3.864 \pm 0.007$ & $\textbf{3.857} \pm 0.007$ & $3.865 \pm 0.007$ \\
    Random Forest & $\textbf{3.276} \pm 0.006$ & $3.286 \pm 0.006$ & $3.303 \pm 0.006$ \\
    Ridge & $3.864 \pm 0.007$ & $\textbf{3.857} \pm 0.007$ & $3.865 \pm 0.007$ \\
    SVR (RBF) & $\textbf{3.226} \pm 0.006$ & $3.254 \pm 0.006$ & $3.269 \pm 0.006$ \\
    \midrule
    average & $3.729$ & $\textbf{3.606}$ & $3.618$ \\
    \bottomrule
  \end{tabular}
  \end{center}
\end{table}

\begin{table}
  \caption{Friedman 2 dataset}\label{tab:F2_10K3}
  \medskip
  \begin{center}
  \begin{tabular}{lccc}
    \toprule
    & none & CVAR1 & CVAR10 \\
    \midrule
    Elastic Net & $181.783 \pm 0.340$ & $\textbf{82.677} \pm 0.173$ & $84.128 \pm 0.173$ \\
    Gradient Boosting & $\textbf{15.432} \pm 0.057$ & $48.262 \pm 0.120$ & $50.548 \pm 0.152$ \\
    Lasso & $137.838 \pm 0.233$ & $\textbf{82.665} \pm 0.171$ & $84.119 \pm 0.171$ \\
    Linear Regression & $137.820 \pm 0.233$ & $\textbf{82.709} \pm 0.171$ & $84.160 \pm 0.171$ \\
    Random Forest & $\textbf{6.550} \pm 0.019$ & $47.284 \pm 0.120$ & $49.619 \pm 0.152$ \\
    Ridge & $137.820 \pm 0.233$ & $\textbf{82.709} \pm 0.171$ & $84.160 \pm 0.171$ \\
    SVR (RBF) & $140.878 \pm 0.536$ & $\textbf{105.209} \pm 0.381$ & $105.363 \pm 0.403$ \\
    \midrule
    average & $108.303$ & $\textbf{75.931}$ & $77.443$ \\
    \bottomrule
  \end{tabular}
  \end{center}
\end{table}

\begin{table}
  \caption{Friedman 3 dataset}\label{tab:F3_10K3}
  \medskip
  \begin{center}
  \begin{tabular}{lccc}
    \toprule
    & none & CVAR1 & CVAR10 \\
    \midrule
    Elastic Net & $3.022 \pm 0.005$ & $3.022 \pm 0.005$ & $\textbf{3.022} \pm 0.005$ \\
    Gradient Boosting & $3.023 \pm 0.005$ & $3.016 \pm 0.005$ & $\textbf{3.015} \pm 0.005$ \\
    Lasso & $3.022 \pm 0.005$ & $3.022 \pm 0.005$ & $\textbf{3.022} \pm 0.005$ \\
    Linear Regression & $3.013 \pm 0.005$ & $3.015 \pm 0.005$ & $\textbf{3.013} \pm 0.005$ \\
    Random Forest & $3.111 \pm 0.005$ & $3.023 \pm 0.005$ & $\textbf{3.022} \pm 0.005$ \\
    Ridge & $3.013 \pm 0.005$ & $3.015 \pm 0.005$ & $\textbf{3.013} \pm 0.005$ \\
    SVR (RBF) & $3.023 \pm 0.005$ & $3.017 \pm 0.005$ & $\textbf{3.016} \pm 0.005$ \\
    \midrule
    average & $3.032$ & $3.018$ & $\textbf{3.018}$ \\
    \bottomrule
  \end{tabular}
  \end{center}
\end{table}

We report the results for $\sigma\in\{1,3\}$ and $n=1000$
and for $\sigma=1$ and $n=10000$ settings in Appendix~\ref{app:extra}.
The results for $n=10000$ and $\sigma=1$ are similar to the results in this section,
while the results for $n=1000$ are mixed for $\sigma=3$
and weaker for our methods in the case of low noise, $\sigma=1$.
It appears that our methods work well for larger datasets.

In Appendix~\ref{app:extra} we also give results
for four real-life datasets.
Our methods tend to improve the performance of the base algorithms
unless the dataset is small, but the tendency is weak.

\subsection*{The widths of regression intervals}

In the main part of this section we only studied
the performance of our point predictors, CVAR1 and CVAR10,
ultimately based on inductive Venn--Abers prediction
but not having any formal properties of validity.
The reason for this choice was that the properties of validity that we established for IVARs
are very different from those used in existing literature,
and any comparisons with existing algorithms would be of very limited interest.
For example, the main algorithm (Algorithm~2) in \citet{Laan/Alaa:2024}
outputs prediction rather than regression intervals
and, therefore, produces intervals that are wider than ours by orders of magnitude
for a typical dataset considered earlier in this section.
It does not mean at all that our algorithms are better
since, unlike Algorithm~2 in \citet{Laan/Alaa:2024},
they do not guarantee a good coverage probability.

\begin{table}
  \caption{Regression interval widths for the naive IVAR, IVAR1, and IVAR10
    as described in the text}\label{tab:widths}
  \medskip
  \begin{center}
  \begin{tabular}{lccc}
    \toprule
    Dataset & naive IVAR & IVAR1 & IVAR10 \\
    \midrule
    Bounded logistic & 0.163 & 0.164 & 0.127 \\
    Linear Gaussian & 0.206 & 0.206 & 0.152 \\
    Nonlinear & 0.216 & 0.217 & 0.159 \\
    Heteroscedastic noise & 0.221 & 0.222 & 0.140 \\
    Heavy-tailed noise & 0.295 & 0.302 & 0.132 \\
    Outlier contamination & 0.381 & 0.392 & 0.126 \\
    Sparse signals & 0.105 & 0.106 & 0.077 \\
    Covariate shift & 0.275 & 0.276 & 0.193 \\
    Friedman 1 & 0.231 & 0.231 & 0.178 \\
    Friedman 2 & 0.255 & 0.255 & 0.232 \\
    Friedman 3 & 0.075 & 0.076 & 0.055 \\
    \bottomrule
    \end{tabular}
  \end{center}
\end{table}

Table~\ref{tab:widths} compares the widths
(normalized by the standard deviation of the training labels)
of regression intervals produced by IVAR1 and IVAR10,
i.e., the IVAR with $m=1$ and $m=10$, respectively.
Even though such a comparison is internal to this paper,
it is somewhat instructive as it shows that IVAR10 produces regression intervals
that are considerably shorter.
The column ``naive IVAR'' in the table gives results for the modification of the IVAR
in which $y_*$ and $y^*$ are replaced by the smallest and largest labels,
respectively, in the calibration set (as suggested by a reviewer).
It can be checked that the naive IVAR never outputs regression intervals
that are wider than the regression intervals output by the IVAR1 on the same data,
but the former can be narrower
(this can be deduced from Lemma~\ref{lem:monotonicity}
given in Appendix~\ref{app:proofs}).
Of course, the price to pay for the potentially narrower regression intervals
is that the naive IVAR does not enjoy
the same provable properties of validity as the IVAR1.
In our experiments summarized in Table~\ref{tab:widths},
the IVAR10 produces by far the narrowest regression intervals,
while the naive IVAR and IVAR1 have almost identical widths.
The IVAR1 produces regression intervals that are noticeably wider than the naive IVAR's
only for the heavy-tailed noise and outlier contamination datasets,
but even for them the difference between the IVAR1 and IVAR10
dwarfs the difference between the naive IVAR and IVAR1.
The results of Table~\ref{tab:widths} are for the Gradient Boosting base regressor,
$n:=10000$, and $\sigma:=3$,
but these results are typical.
The widths of the regression intervals are computed from the training set only;
the training set is split randomly into $K=10$ folds
and each fold in turn is used as the calibration set (similarly to CVARs);
the widths of the resulting regression intervals are then averaged.

\section{Conclusion}

This paper presents new validity guarantees for regression estimation.
Computational experiments show that this leads to a limited improvement
in the performance of standard point regressors on large datasets.

These are some directions for further research.
\begin{itemize}
\item
  An alternative merging procedure to the ones used in Sect.~\ref{sec:CVAR}
  is to try and merge the regression intervals coming from the $K$ folds directly
  (in a minimax manner, as in \citealt[Sect.~6.4.5]{Vovk/etal:2022book})
  in order to obtain an overall regression estimate,
  without the intermediate step of merging each regression interval.
  It would be interesting to compare it experimentally
  with the procedure used in this paper.
\item
  In this paper we explore the predictive efficiency of the CVAR in simulation and empirical studies.
  Alternatively, we could try and prove theoretical results along the lines of the Burnaev--Wasserman programme,
  as described in \citet[Sects.~2.5 and~2.9.7]{Vovk/etal:2022book}.
  As a first step, we may assume that the base predictions $r$
  coincide with the true regression function, $r:=\E(Y\mid X=x)$.
\item
  Our methods tend to improve significantly the quality of predictions
  made by Lasso and Elastic Net.
  A recurring pattern of the boldface (i.e., best) entries in our tables
  is ``CVAR1, none, CVAR1, none, none, none, none'' (from top to bottom).
  In the main paper this pattern occurs in Table~\ref{tab:CS10K3},
  and it occurs in 18 out of 37 tables
  in Appendix~\ref{app:extra}.
  For this pattern our methods help only for Lasso and Elastic Net.
  However, it can be argued that applying our methods destroys,
  at least partially,
  a valued property of Lasso and Elastic Net,
  automatic feature selection.
  Optimizing the number of features is another interesting desideratum,
  alongside predictive efficiency.
\item
  Results for bagged models are more mixed.
  CVAR1 improves on Random Forest 8 times out of 11
  in Tables~\ref{tab:BL10K3}--\ref{tab:F3_10K3},
  while CVAR10 improves on it 7 times out of 11;
  however, the improvements are generally modest and there remain
  several datasets where calibration worsens Random Forest predictions.
  One of the reviewers suggested that there is an element of bagging
  in cross Venn--Abers regression as we average point predictions coming from different folds.
  It would be interesting to disentangle the effects of bagging and calibration
  in CVARs' performance.
\item
  In our empirical studies we focused on point predictors, CVARs,
  derived from IVARs but not enjoying any formal properties of validity.
  Only in Table~\ref{tab:widths} did we compare the widths of regression intervals
  coming from IVAR1 and IVAR10.
  Perhaps the most important direction of further research
  is the empirical study of IVARs, both their validity and efficiency.
  While validity is asserted in Theorem~\ref{thm:validity},
  checking it empirically might shed further light on its meaning.
  Exploring the efficiency of IVARs requires designing suitable ways
  of measuring it,
  along the lines of proper scoring rules (see, e.g., \citealt{Dawid:ESS2006PF})
  in probability forecasting
  and conditionally proper criteria of efficiency \citep[Sect.~3.1.5]{Vovk/etal:2022book}
  in conformal prediction.
  A natural question is how to balance encouraging short regression intervals
  and a good performance of, say, their midpoints (e.g., in the sense of RMSE).
\end{itemize}

\subsection*{Acknowledgements}

We are grateful to anonymous referees for their criticism and advice.
We used Claude Opus 5 for debugging our code and checking final versions of the paper
for errors and inconsistencies.

\appendix
\section{Proofs}
\label{app:proofs}

\subsection{Proof of Theorem~\ref{thm:simple-validity}}

The selector $S$ that we use for demonstrating the theorem
is a modification of steps~\ref{it:bounded-fit-1}--\ref{it:bounded-last}
in the description of the bounded IVAR
corresponding to using the true test label.
Namely, we fit isotonic regression to $(r_1,y_1),\dots,(r_k,y_k),(r,y)$,
where $y$ is the true label of the test object,
obtaining an isotonic calibrator $f$.
Set $S:=f(r)$.

First we need to check that $S$ is indeed a selector,
namely that $f_*(r)\le S\le f^*(r)$.
This follows from $C_*\le y\le C^*$
and the monotonicity property of PAVA
given by the following lemma.

\begin{lemma}\label{lem:monotonicity}
  If $y_i\le y'_i$ for all $i\in\{1,\dots,n\}$,
  then $f\le f'$,
  where $f$ (resp.\ $f'$) is the isotonic regression
  fitted to $(r_1,y_1),\dots,(r_n,y_n)$
  (resp.\ to $(r_1,y'_1),\dots,(r_n,y'_n)$).
\end{lemma}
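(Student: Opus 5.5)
We prove the monotonicity of isotonic regression through its min--max representation. That representation lets us compare $f$ and $f'$ pointwise without tracking how the PAVA pools blocks.

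Group the points by the distinct values of $r_i$; since ties in $r_i$ are handled by pooling, we may work with the distinct values $s_1<\dots<s_{n'}$, weights $w_j$ (the number of $i$ with $r_i=s_j$) and block averages $\bar y_j$, $\bar y'_j$. Because $y_i\le y'_i$ for every $i$, averaging gives $\bar y_j\le\bar y'_j$ for every $j$.

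The key tool is the classical min--max formula for weighted isotonic regression, as in \citet[Sect.~1.2]{Barlow/etal:1972}: the PAVA output at $s_j$ is
\begin{equation*}
  f(s_j)
  =
  \max_{a\le j}\min_{b\ge j}
  \frac{\sum_{t=a}^{b}w_t\bar y_t}{\sum_{t=a}^{b}w_t},
\end{equation*}
and similarly for $f'$ with $\bar y'_t$. For each fixed pair $(a,b)$, the weighted average over $[a,b]$ is nondecreasing in each $\bar y_t$, because the weights are positive. Taking the min over $b$ and then the max over $a$ preserves the inequality, so $f(s_j)\le f'(s_j)$ for all $j$.

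This gives $f\le f'$ at the data points. Extending the comparison to all arguments depends on the convention for evaluating the isotonic calibrator between or outside the $s_j$. The convention is the one in \citet[Sect.~6.5]{Vovk/etal:2022book}: piecewise constant, with the value at a point taken from an adjacent $s_j$ by a rule that depends only on the $s_j$. Under that convention the pointwise inequality at the $s_j$ transfers to every argument. In the application to Theorem~\ref{thm:simple-validity} we only need the value at $r$, which is itself one of the data points, so this step is immediate there.

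The main obstacle is justifying the min--max formula, if we do not simply cite it. If it must be proved, I would first show the standard characterization: the PAVA output is the unique minimizer of the weighted squared loss over nondecreasing sequences. Its level sets are maximal blocks whose value equals the block average, and no sub-block can be split without violating monotonicity. From this, a direct check yields the max--min expression.

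If I wanted to avoid that route, an alternative is to induct on the PAVA steps. However, pooling orders can differ between $y$ and $y'$, which makes that route messier. So I would rely on the min--max representation, which the cited references already establish.
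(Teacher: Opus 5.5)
Your proposal is correct and takes the same approach as the paper, whose entire proof is that the lemma is an immediate corollary of the max--min formulas for isotonic regression in Barlow et al.\ (p.~19): each block average is nondecreasing in the labels, and taking $\min$ and then $\max$ preserves the inequality. Your extra remarks on pooling tied $r_i$ and on evaluating the calibrator away from the data points are harmless elaborations; the latter is not needed in the paper's applications, where $r$ is itself one of the fitted points.
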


\begin{proof}
  This is an immediate corollary of the max-min formulas
  (as given in, e.g., \citealt[p.~19]{Barlow/etal:1972}).
\end{proof}

Now let us check that $\E(Y\mid S)=S$ even conditionally
on the bag $B=\lbag Z_1,\dots,Z_k,Z\rbag$ of $k$ calibration examples and a test example
under the uniform probability measure on all $(k+1)!$ orderings of the bag
(see \citealt[Lemma~A.3]{Vovk/etal:2022book}).
The value of $S$ determines the solution block containing the test example.
Therefore, both $S$ and $\E(Y\mid S,B)$ will equal the arithmetic mean
of the labels in that solution block.

\subsection{Proof of Theorem~\ref{thm:validity}}

Fix an IVAR.
The selector $S$ is produced by the following ideal picture.
Let $y$ be the true label of the test object $x$.
Then the selector $S$ associated with the IVAR
is defined by the following recipe
applied after splitting the training set
and training the base regression algorithm on the proper training set
(which gives us a prediction rule).
\begin{enumerate}
\item\label{it:validity}
  Winsorize the labels, namely:
  \begin{itemize}
  \item
    replace the $m$ largest labels
    in the \emph{augmented calibration sequence}
    \[(x_1,y_1),\dots,(x_k,y_k),(x,y)\]
    by the $(m+1)$th largest label;
  \item
    replace the $m$ smallest labels
    in the augmented calibration sequence
    by the $(m+1)$th smallest label.
  \end{itemize}
\item
  Find the base predictions $r_1,\dots,r_k,r$ of the objects $x_1,\dots,x_k,x$
  using the prediction rule.
\item
  Fit isotonic regression $f$ to $(r_1,y_1),\dots,(r_k,y_k),(r,y)$
  (with the Winsorized labels).
\item
  Set $S:=f(r)$.
\end{enumerate}
Step \ref{it:validity} regularizes the labels moderating the $2m$ most extreme ones.
The recipe treats all elements of the augmented calibration sequence symmetrically,
which is essential in the proof of $\E(Y'\mid S)=S$;
let us call it the \emph{symmetric recipe}.

First let us check that $S$ is indeed a selector for the IVAR.
Consider three cases:
\begin{itemize}
\item
  If $y$ is one of the $m$ largest labels in the augmented calibration sequence,
  the definition of IVAR and the symmetric recipe modify the labels in the same way;
  in particular, the $m-1$ largest calibration labels and the test label
  are replaced by $y^*$,
  and the $m$ smallest calibration labels are replaced by $y_*$.
  In this case $S=\hat y^*$.
\item
  Analogously, if $y$ is one of the $m$ smallest labels in the augmented calibration sequence,
  we have $S=\hat y_*$.
\item
  Otherwise,
  the monotonicity property of isotonic regression
  given in Lemma~\ref{lem:monotonicity} above
  implies that we still have $S\in[\hat y_*,\hat y^*]$.
\end{itemize}
In all three cases, $S\in[\hat y_*,\hat y^*]$,
which means that $S$ is a selector.

\begin{remark}\upshape
  The first two cases considered in the proof demonstrate
  the tightness, in some sense, of the regression interval $[\hat y_*,\hat y^*]$.
\end{remark}

It remains to prove $\E(Y'\mid S)=S$.
We prove this equality conditionally on the set of all permutations
(equiprobable)
of a given augmented calibration sequence,
where a permutation $\pi$ of $\{1,\dots,k+1\}$ makes $z_{\pi^{-1}(k+1)}$ the test example.
The Winsorized test label \eqref{eq:Y'} over the permutations
is the same random variable as the test label produced
according to the symmetric recipe applied to the permutations.
It remains to remember that the isotonic regression
at the base prediction $r$ for the test object
is the mean of the labels in the augmented calibration sequence
in the same solution block as the test example
\citep[pp.~13--15]{Barlow/etal:1972}.
Solution blocks are level sets of the isotonic regression,
which implies $S$ being the conditional average of the Winsorized test label
given $S$.

\section{Further Experimental Results}
\label{app:extra}

In Tables~\ref{tab:BL10K1}--\ref{tab:F3_10K1}
we show the experimental results in the case of $\sigma=1$
parallel to those reported in Sect.~\ref{sec:experiments} of the main paper;
in particular, we still have $n=10000$.
Our comments in the main paper are also applicable here.
In particular, it is still true that on average
both CVAR1 and CVAR10 produce better results than the base algorithms
in all 11 tables.
The level of noise does not appear to be a crucial parameter.

\begin{table}
  \caption{Bounded logistic dataset ($\sigma=1$)}\label{tab:BL10K1}
  \medskip
  \begin{center}
  \begin{tabular}{lccc}
    \toprule
    & none & CVAR1 & CVAR10 \\
    \midrule
    Elastic Net & $2.551 \pm 0.021$ & $\textbf{1.505} \pm 0.012$ & $1.508 \pm 0.012$ \\
    Gradient Boosting & $1.532 \pm 0.018$ & $\textbf{1.293} \pm 0.008$ & $1.297 \pm 0.008$ \\
    Lasso & $2.830 \pm 0.024$ & $\textbf{2.201} \pm 0.025$ & $2.203 \pm 0.025$ \\
    Linear Regression & $1.596 \pm 0.021$ & $\textbf{1.030} \pm 0.002$ & $1.035 \pm 0.002$ \\
    Random Forest & $1.497 \pm 0.017$ & $\textbf{1.439} \pm 0.013$ & $1.443 \pm 0.013$ \\
    Ridge & $1.596 \pm 0.021$ & $\textbf{1.030} \pm 0.002$ & $1.035 \pm 0.002$ \\
    SVR (RBF) & $1.211 \pm 0.009$ & $\textbf{1.089} \pm 0.002$ & $1.093 \pm 0.002$ \\
    \midrule
    average & $1.830$ & $\textbf{1.370}$ & $1.373$ \\
    \bottomrule
  \end{tabular}
  \end{center}
\end{table}

\begin{table}
  \caption{Linear Gaussian dataset ($\sigma=1$)}\label{tab:LG10K1}
  \medskip
  \begin{center}
  \begin{tabular}{lccc}
    \toprule
    & none & CVAR1 & CVAR10 \\
    \midrule
    Elastic Net & $2.079 \pm 0.024$ & $\textbf{1.395} \pm 0.009$ & $1.414 \pm 0.010$ \\
    Gradient Boosting & $\textbf{1.155} \pm 0.008$ & $1.172 \pm 0.008$ & $1.196 \pm 0.008$ \\
    Lasso & $2.397 \pm 0.026$ & $\textbf{1.921} \pm 0.024$ & $1.933 \pm 0.024$ \\
    Linear Regression & $\textbf{1.001} \pm 0.002$ & $1.084 \pm 0.005$ & $1.110 \pm 0.006$ \\
    Random Forest & $\textbf{1.281} \pm 0.013$ & $1.296 \pm 0.013$ & $1.317 \pm 0.014$ \\
    Ridge & $\textbf{1.001} \pm 0.002$ & $1.084 \pm 0.005$ & $1.110 \pm 0.006$ \\
    SVR (RBF) & $\textbf{1.094} \pm 0.004$ & $1.152 \pm 0.006$ & $1.177 \pm 0.007$ \\
    \midrule
    average & $1.430$ & $\textbf{1.301}$ & $1.322$ \\
    \bottomrule
  \end{tabular}
  \end{center}
\end{table}

\begin{table}
  \caption{Nonlinear dataset ($\sigma=1$)}\label{tab:NL10K1}
  \medskip
  \begin{center}
  \begin{tabular}{lccc}
    \toprule
    & none & CVAR1 & CVAR10 \\
    \midrule
    Elastic Net & $2.417 \pm 0.021$ & $\textbf{1.763} \pm 0.007$ & $1.780 \pm 0.008$ \\
    Gradient Boosting & $\textbf{1.227} \pm 0.009$ & $1.241 \pm 0.008$ & $1.271 \pm 0.009$ \\
    Lasso & $2.695 \pm 0.024$ & $\textbf{2.212} \pm 0.020$ & $2.223 \pm 0.020$ \\
    Linear Regression & $\textbf{1.494} \pm 0.003$ & $1.539 \pm 0.005$ & $1.560 \pm 0.005$ \\
    Random Forest & $\textbf{1.417} \pm 0.015$ & $1.437 \pm 0.015$ & $1.462 \pm 0.015$ \\
    Ridge & $\textbf{1.494} \pm 0.003$ & $1.539 \pm 0.005$ & $1.560 \pm 0.005$ \\
    SVR (RBF) & $\textbf{1.202} \pm 0.004$ & $1.270 \pm 0.006$ & $1.298 \pm 0.007$ \\
    \midrule
    average & $1.707$ & $\textbf{1.572}$ & $1.594$ \\
    \bottomrule
  \end{tabular}
  \end{center}
\end{table}

\begin{table}
  \caption{Heteroscedastic noise dataset ($\sigma=1$)}\label{tab:HSN10K1}
  \medskip
  \begin{center}
  \begin{tabular}{lccc}
    \toprule
    & none & CVAR1 & CVAR10 \\
    \midrule
    Elastic Net & $2.315 \pm 0.022$ & $\textbf{1.722} \pm 0.008$ & $1.740 \pm 0.009$ \\
    Gradient Boosting & $\textbf{1.550} \pm 0.007$ & $1.554 \pm 0.007$ & $1.576 \pm 0.007$ \\
    Lasso & $2.604 \pm 0.025$ & $\textbf{2.173} \pm 0.021$ & $2.184 \pm 0.021$ \\
    Linear Regression & $\textbf{1.430} \pm 0.004$ & $1.481 \pm 0.005$ & $1.504 \pm 0.006$ \\
    Random Forest & $\textbf{1.640} \pm 0.010$ & $1.645 \pm 0.010$ & $1.665 \pm 0.011$ \\
    Ridge & $\textbf{1.430} \pm 0.004$ & $1.481 \pm 0.005$ & $1.504 \pm 0.006$ \\
    SVR (RBF) & $\textbf{1.509} \pm 0.005$ & $1.543 \pm 0.006$ & $1.564 \pm 0.007$ \\
    \midrule
    average & $1.782$ & $\textbf{1.657}$ & $1.677$ \\
    \bottomrule
  \end{tabular}
  \end{center}
\end{table}

\begin{table}
  \caption{Heavy-tailed noise dataset ($\sigma=1$)}\label{tab:HTN10K1}
  \medskip
  \begin{center}
  \begin{tabular}{lccc}
    \toprule
    & none & CVAR1 & CVAR10 \\
    \midrule
    Elastic Net & $2.512 \pm 0.029$ & $\textbf{1.958} \pm 0.025$ & $1.980 \pm 0.025$ \\
    Gradient Boosting & $1.819 \pm 0.026$ & $\textbf{1.814} \pm 0.026$ & $1.841 \pm 0.026$ \\
    Lasso & $2.785 \pm 0.030$ & $\textbf{2.380} \pm 0.028$ & $2.393 \pm 0.028$ \\
    Linear Regression & $\textbf{1.707} \pm 0.026$ & $1.744 \pm 0.026$ & $1.772 \pm 0.026$ \\
    Random Forest & $1.901 \pm 0.026$ & $\textbf{1.895} \pm 0.026$ & $1.920 \pm 0.026$ \\
    Ridge & $\textbf{1.707} \pm 0.026$ & $1.744 \pm 0.026$ & $1.772 \pm 0.026$ \\
    SVR (RBF) & $\textbf{1.770} \pm 0.025$ & $1.795 \pm 0.025$ & $1.820 \pm 0.025$ \\
    \midrule
    average & $2.028$ & $\textbf{1.904}$ & $1.928$ \\
    \bottomrule
  \end{tabular}
  \end{center}
\end{table}

\begin{table}
  \caption{Outlier contamination dataset ($\sigma=1$)}\label{tab:OC10K1}
  \medskip
  \begin{center}
  \begin{tabular}{lccc}
    \toprule
    & none & CVAR1 & CVAR10 \\
    \midrule
    Elastic Net & $2.330 \pm 0.022$ & $\textbf{1.719} \pm 0.014$ & $1.747 \pm 0.014$ \\
    Gradient Boosting & $1.561 \pm 0.013$ & $\textbf{1.555} \pm 0.014$ & $1.590 \pm 0.014$ \\
    Lasso & $2.620 \pm 0.025$ & $\textbf{2.182} \pm 0.022$ & $2.198 \pm 0.022$ \\
    Linear Regression & $\textbf{1.430} \pm 0.014$ & $1.476 \pm 0.014$ & $1.512 \pm 0.014$ \\
    Random Forest & $1.656 \pm 0.015$ & $\textbf{1.653} \pm 0.015$ & $1.683 \pm 0.015$ \\
    Ridge & $\textbf{1.430} \pm 0.014$ & $1.476 \pm 0.014$ & $1.512 \pm 0.014$ \\
    SVR (RBF) & $\textbf{1.500} \pm 0.014$ & $1.532 \pm 0.014$ & $1.564 \pm 0.014$ \\
    \midrule
    average & $1.790$ & $\textbf{1.656}$ & $1.687$ \\
    \bottomrule
  \end{tabular}
  \end{center}
\end{table}

\begin{table}
  \caption{Sparse signals dataset ($\sigma=1$)}\label{tab:SHDS10K1}
  \medskip
  \begin{center}
  \begin{tabular}{lccc}
    \toprule
    & none & CVAR1 & CVAR10 \\
    \midrule
    Elastic Net & $1.144 \pm 0.012$ & $\textbf{1.016} \pm 0.003$ & $1.019 \pm 0.003$ \\
    Gradient Boosting & $\textbf{1.004} \pm 0.002$ & $1.009 \pm 0.002$ & $1.011 \pm 0.002$ \\
    Lasso & $1.214 \pm 0.018$ & $\textbf{1.083} \pm 0.011$ & $1.086 \pm 0.011$ \\
    Linear Regression & $\textbf{0.998} \pm 0.002$ & $1.005 \pm 0.002$ & $1.008 \pm 0.002$ \\
    Random Forest & $1.020 \pm 0.002$ & $\textbf{1.018} \pm 0.002$ & $1.021 \pm 0.002$ \\
    Ridge & $\textbf{0.998} \pm 0.002$ & $1.005 \pm 0.002$ & $1.008 \pm 0.002$ \\
    SVR (RBF) & $1.030 \pm 0.002$ & $\textbf{1.027} \pm 0.003$ & $1.030 \pm 0.003$ \\
    \midrule
    average & $1.058$ & $\textbf{1.023}$ & $1.026$ \\
  \end{tabular}
  \end{center}
\end{table}

\begin{table}
  \caption{Covariate shift dataset ($\sigma=1$)}\label{tab:CS10K1}
  \medskip
  \begin{center}
  \begin{tabular}{lccc}
    \toprule
    & none & CVAR1 & CVAR10 \\
    \midrule
    Elastic Net & $2.619 \pm 0.068$ & $\textbf{1.796} \pm 0.040$ & $1.842 \pm 0.043$ \\
    Gradient Boosting & $\textbf{1.462} \pm 0.024$ & $1.569 \pm 0.037$ & $1.625 \pm 0.042$ \\
    Lasso & $3.047 \pm 0.083$ & $\textbf{2.542} \pm 0.063$ & $2.566 \pm 0.064$ \\
    Linear Regression & $\textbf{1.002} \pm 0.001$ & $1.313 \pm 0.036$ & $1.377 \pm 0.041$ \\
    Random Forest & $\textbf{1.778} \pm 0.036$ & $1.846 \pm 0.042$ & $1.892 \pm 0.046$ \\
    Ridge & $\textbf{1.002} \pm 0.001$ & $1.313 \pm 0.036$ & $1.377 \pm 0.041$ \\
    SVR (RBF) & $\textbf{1.888} \pm 0.052$ & $2.019 \pm 0.060$ & $2.053 \pm 0.062$ \\
    \midrule
    average & $1.828$ & $\textbf{1.771}$ & $1.819$ \\
    \bottomrule
  \end{tabular}
  \end{center}
\end{table}

\begin{table}
  \caption{Friedman 1 dataset ($\sigma=1$)}\label{tab:F1_10K1}
  \medskip
  \begin{center}
  \begin{tabular}{lccc}
    \toprule
    & none & CVAR1 & CVAR10 \\
    \midrule
    Elastic Net & $3.344 \pm 0.005$ & $\textbf{2.649} \pm 0.005$ & $2.658 \pm 0.005$ \\
    Gradient Boosting & $\textbf{1.307} \pm 0.004$ & $1.396 \pm 0.003$ & $1.425 \pm 0.003$ \\
    Lasso & $3.305 \pm 0.005$ & $\textbf{2.775} \pm 0.005$ & $2.783 \pm 0.005$ \\
    Linear Regression & $2.629 \pm 0.005$ & $\textbf{2.628} \pm 0.005$ & $2.637 \pm 0.005$ \\
    Random Forest & $\textbf{1.498} \pm 0.003$ & $1.582 \pm 0.003$ & $1.608 \pm 0.003$ \\
    Ridge & $2.629 \pm 0.005$ & $\textbf{2.628} \pm 0.005$ & $2.637 \pm 0.005$ \\
    SVR (RBF) & $\textbf{1.309} \pm 0.003$ & $1.427 \pm 0.003$ & $1.456 \pm 0.003$ \\
    \midrule
    average & $2.289$ & $\textbf{2.155}$ & $2.172$ \\
    \bottomrule
  \end{tabular}
  \end{center}
\end{table}

\begin{table}
  \caption{Friedman 2 dataset ($\sigma=1$)}\label{tab:F2_10K1}
  \medskip
  \begin{center}
  \begin{tabular}{lccc}
    \toprule
    & none & CVAR1 & CVAR10 \\
    \midrule
    Elastic Net & $181.762 \pm 0.340$ & $\textbf{82.630} \pm 0.173$ & $84.079 \pm 0.173$ \\
    Gradient Boosting & $\textbf{15.208} \pm 0.061$ & $48.217 \pm 0.120$ & $50.498 \pm 0.151$ \\
    Lasso & $137.802 \pm 0.233$ & $\textbf{82.620} \pm 0.171$ & $84.070 \pm 0.171$ \\
    Linear Regression & $137.783 \pm 0.233$ & $\textbf{82.663} \pm 0.172$ & $84.112 \pm 0.172$ \\
    Random Forest & $\textbf{5.791} \pm 0.021$ & $47.214 \pm 0.121$ & $49.544 \pm 0.153$ \\
    Ridge & $137.784 \pm 0.233$ & $\textbf{82.663} \pm 0.172$ & $84.112 \pm 0.172$ \\
    SVR (RBF) & $140.800 \pm 0.538$ & $\textbf{105.108} \pm 0.381$ & $105.258 \pm 0.404$ \\
    \midrule
    average & $108.133$ & $\textbf{75.874}$ & $77.382$ \\
    \bottomrule
   \end{tabular}
  \end{center}
\end{table}

\begin{table}
  \caption{Friedman 3 dataset ($\sigma=1$)}\label{tab:F3_10K1}
  \medskip
  \begin{center}
  \begin{tabular}{lccc}
    \toprule
    & none & CVAR1 & CVAR10 \\
    \midrule
    Elastic Net & $1.050 \pm 0.002$ & $1.050 \pm 0.002$ & $\textbf{1.050} \pm 0.002$ \\
    Gradient Boosting & $1.009 \pm 0.002$ & $\textbf{1.007} \pm 0.002$ & $1.008 \pm 0.002$ \\
    Lasso & $1.050 \pm 0.002$ & $1.050 \pm 0.002$ & $\textbf{1.050} \pm 0.002$ \\
    Linear Regression & $1.022 \pm 0.002$ & $1.014 \pm 0.002$ & $\textbf{1.013} \pm 0.002$ \\
    Random Forest & $1.042 \pm 0.002$ & $\textbf{1.018} \pm 0.002$ & $1.018 \pm 0.002$ \\
    Ridge & $1.022 \pm 0.002$ & $1.014 \pm 0.002$ & $\textbf{1.013} \pm 0.002$ \\
    SVR (RBF) & $1.012 \pm 0.002$ & $1.010 \pm 0.002$ & $\textbf{1.010} \pm 0.002$ \\
    \midrule
    average & $1.030$ & $\textbf{1.023}$ & $1.023$ \\
    \bottomrule
  \end{tabular}
  \end{center}
\end{table}

Tables~\ref{tab:BL1K3}--\ref{tab:F3_1K3} give results for $n=1000$ examples
and noise level $\sigma=3$.
For these smaller datasets the results are mixed,
as we said in the main paper.

\begin{table}
  \caption{Bounded logistic dataset ($n=1000$ and $\sigma=3$)}\label{tab:BL1K3}
  \medskip
  \begin{center}
  \begin{tabular}{lccc}
    \toprule
    & none & CVAR1 & CVAR10 \\
    \midrule
    Elastic Net & $3.831 \pm 0.021$ & $\textbf{3.303} \pm 0.018$ & $3.418 \pm 0.018$ \\
    Gradient Boosting & $3.420 \pm 0.019$ & $\textbf{3.375} \pm 0.016$ & $3.492 \pm 0.017$ \\
    Lasso & $4.035 \pm 0.023$ & $\textbf{3.687} \pm 0.023$ & $3.770 \pm 0.023$ \\
    Linear Regression & $3.288 \pm 0.019$ & $\textbf{3.119} \pm 0.015$ & $3.247 \pm 0.015$ \\
    Random Forest & $3.449 \pm 0.018$ & $\textbf{3.436} \pm 0.017$ & $3.543 \pm 0.018$ \\
    Ridge & $3.288 \pm 0.019$ & $\textbf{3.119} \pm 0.015$ & $3.247 \pm 0.015$ \\
    SVR (RBF) & $3.345 \pm 0.018$ & $\textbf{3.235} \pm 0.016$ & $3.353 \pm 0.016$ \\
    \midrule
    average & $3.522$ & $\textbf{3.325}$ & $3.439$ \\
    \bottomrule
  \end{tabular}
  \end{center}
\end{table}

\begin{table}
  \caption{Linear Gaussian dataset ($n=1000$ and $\sigma=3$)}\label{tab:LG1K3}
  \medskip
  \begin{center}
  \begin{tabular}{lccc}
    \toprule
    & none & CVAR1 & CVAR10 \\
    \midrule
    Elastic Net & $3.549 \pm 0.022$ & $\textbf{3.292} \pm 0.019$ & $3.431 \pm 0.023$ \\
    Gradient Boosting & $\textbf{3.228} \pm 0.016$ & $3.318 \pm 0.020$ & $3.459 \pm 0.024$ \\
    Lasso & $3.755 \pm 0.025$ & $\textbf{3.557} \pm 0.023$ & $3.655 \pm 0.027$ \\
    Linear Regression & $\textbf{3.016} \pm 0.015$ & $3.177 \pm 0.018$ & $3.333 \pm 0.022$ \\
    Random Forest & $\textbf{3.289} \pm 0.018$ & $3.358 \pm 0.022$ & $3.492 \pm 0.026$ \\
    Ridge & $\textbf{3.016} \pm 0.015$ & $3.177 \pm 0.018$ & $3.333 \pm 0.022$ \\
    SVR (RBF) & $\textbf{3.256} \pm 0.019$ & $3.306 \pm 0.021$ & $3.437 \pm 0.025$ \\
    \midrule
    average & $\textbf{3.301}$ & $3.312$ & $3.449$ \\
    \bottomrule
  \end{tabular}
  \end{center}
\end{table}

\begin{table}
  \caption{Nonlinear dataset ($n=1000$ and $\sigma=3$)}\label{tab:NL1K3}
  \medskip
  \begin{center}
  \begin{tabular}{lccc}
    \toprule
    & none & CVAR1 & CVAR10 \\
    \midrule
    Elastic Net & $3.749 \pm 0.023$ & $\textbf{3.476} \pm 0.020$ & $3.626 \pm 0.024$ \\
    Gradient Boosting & $\textbf{3.265} \pm 0.017$ & $3.391 \pm 0.020$ & $3.561 \pm 0.026$ \\
    Lasso & $3.941 \pm 0.026$ & $\textbf{3.727} \pm 0.024$ & $3.838 \pm 0.028$ \\
    Linear Regression & $\textbf{3.210} \pm 0.014$ & $3.371 \pm 0.018$ & $3.537 \pm 0.023$ \\
    Random Forest & $\textbf{3.365} \pm 0.019$ & $3.461 \pm 0.023$ & $3.619 \pm 0.028$ \\
    Ridge & $\textbf{3.209} \pm 0.014$ & $3.371 \pm 0.018$ & $3.537 \pm 0.023$ \\
    SVR (RBF) & $\textbf{3.365} \pm 0.019$ & $3.424 \pm 0.020$ & $3.577 \pm 0.025$ \\
    \midrule
    average & $\textbf{3.444}$ & $3.460$ & $3.614$ \\
    \bottomrule
  \end{tabular}
  \end{center}
\end{table}

\begin{table}
  \caption{Heteroscedastic noise dataset ($n=1000$ and $\sigma=3$)}\label{tab:HSN1K3}
  \medskip
  \begin{center}
  \begin{tabular}{lccc}
    \toprule
    & none & CVAR1 & CVAR10 \\
    \midrule
    Elastic Net & $4.677 \pm 0.032$ & $\textbf{4.475} \pm 0.031$ & $4.597 \pm 0.032$ \\
    Gradient Boosting & $4.545 \pm 0.031$ & $\textbf{4.545} \pm 0.031$ & $4.662 \pm 0.033$ \\
    Lasso & $4.837 \pm 0.033$ & $\textbf{4.680} \pm 0.034$ & $4.766 \pm 0.035$ \\
    Linear Regression & $\textbf{4.282} \pm 0.029$ & $4.388 \pm 0.030$ & $4.524 \pm 0.032$ \\
    Random Forest & $\textbf{4.522} \pm 0.031$ & $4.550 \pm 0.032$ & $4.662 \pm 0.034$ \\
    Ridge & $\textbf{4.282} \pm 0.029$ & $4.388 \pm 0.030$ & $4.524 \pm 0.032$ \\
    SVR (RBF) & $\textbf{4.479} \pm 0.031$ & $4.498 \pm 0.032$ & $4.608 \pm 0.033$ \\
    \midrule
    average & $4.518$ & $\textbf{4.503}$ & $4.620$ \\
    \bottomrule
  \end{tabular}
  \end{center}
\end{table}

\begin{table}
  \caption{Heavy-tailed noise dataset ($n=1000$ and $\sigma=3$)}\label{tab:HTN1K3}
  \medskip
  \begin{center}
  \begin{tabular}{lccc}
    \toprule
    & none & CVAR1 & CVAR10 \\
    \midrule
    Elastic Net & $5.266 \pm 0.091$ & $\textbf{5.097} \pm 0.092$ & $5.187 \pm 0.092$ \\
    Gradient Boosting & $5.297 \pm 0.095$ & $\textbf{5.172} \pm 0.093$ & $5.264 \pm 0.093$ \\
    Lasso & $5.409 \pm 0.089$ & $\textbf{5.278} \pm 0.091$ & $5.338 \pm 0.091$ \\
    Linear Regression & $\textbf{4.949} \pm 0.093$ & $5.029 \pm 0.093$ & $5.130 \pm 0.093$ \\
    Random Forest & $5.237 \pm 0.093$ & $\textbf{5.178} \pm 0.094$ & $5.266 \pm 0.093$ \\
    Ridge & $\textbf{4.949} \pm 0.093$ & $5.029 \pm 0.093$ & $5.130 \pm 0.093$ \\
    SVR (RBF) & $\textbf{5.084} \pm 0.093$ & $5.110 \pm 0.093$ & $5.190 \pm 0.093$ \\
    \midrule
    average & $5.170$ & $\textbf{5.128}$ & $5.215$ \\
    \bottomrule
  \end{tabular}
  \end{center}
\end{table}

\begin{table}
  \caption{Outlier contamination dataset ($n=1000$ and $\sigma=3$)}\label{tab:OC1K3}
  \medskip
  \begin{center}
  \begin{tabular}{lccc}
    \toprule
    & none & CVAR1 & CVAR10 \\
    \midrule
    Elastic Net & $4.561 \pm 0.102$ & $\textbf{4.362} \pm 0.106$ & $4.472 \pm 0.103$ \\
    Gradient Boosting & $4.488 \pm 0.108$ & $\textbf{4.413} \pm 0.106$ & $4.526 \pm 0.104$ \\
    Lasso & $4.727 \pm 0.099$ & $\textbf{4.580} \pm 0.103$ & $4.650 \pm 0.101$ \\
    Linear Regression & $\textbf{4.171} \pm 0.108$ & $4.277 \pm 0.107$ & $4.401 \pm 0.105$ \\
    Random Forest & $4.480 \pm 0.107$ & $\textbf{4.443} \pm 0.106$ & $4.542 \pm 0.103$ \\
    Ridge & $\textbf{4.171} \pm 0.108$ & $4.277 \pm 0.107$ & $4.401 \pm 0.105$ \\
    SVR (RBF) & $\textbf{4.329} \pm 0.106$ & $4.371 \pm 0.106$ & $4.471 \pm 0.104$ \\
    \midrule
    average & $4.418$ & $\textbf{4.389}$ & $4.495$ \\
    \bottomrule
  \end{tabular}
  \end{center}
\end{table}

\begin{table}
  \caption{Sparse signals dataset ($n=1000$ and $\sigma=3$)}\label{tab:SHDS1K3}
  \medskip
  \begin{center}
  \begin{tabular}{lccc}
    \toprule
    & none & CVAR1 & CVAR10 \\
    \midrule
    Elastic Net & $3.022 \pm 0.018$ & $\textbf{2.980} \pm 0.016$ & $2.996 \pm 0.017$ \\
    Gradient Boosting & $3.106 \pm 0.017$ & $\textbf{3.022} \pm 0.017$ & $3.031 \pm 0.018$ \\
    Lasso & $3.042 \pm 0.018$ & $\textbf{3.003} \pm 0.015$ & $3.021 \pm 0.017$ \\
    Linear Regression & $\textbf{2.978} \pm 0.016$ & $2.993 \pm 0.016$ & $3.003 \pm 0.018$ \\
    Random Forest & $3.064 \pm 0.017$ & $\textbf{3.018} \pm 0.017$ & $3.028 \pm 0.019$ \\
    Ridge & $\textbf{2.978} \pm 0.016$ & $2.993 \pm 0.016$ & $3.003 \pm 0.018$ \\
    SVR (RBF) & $3.041 \pm 0.017$ & $\textbf{3.024} \pm 0.017$ & $3.030 \pm 0.018$ \\
    \midrule
    average & $3.033$ & $\textbf{3.005}$ & $3.016$ \\
    \bottomrule
  \end{tabular}
  \end{center}
\end{table}

\begin{table}
  \caption{Covariate shift dataset ($n=1000$ and $\sigma=3$)}\label{tab:CS1K3}
  \medskip
  \begin{center}
  \begin{tabular}{lccc}
    \toprule
    & none & CVAR1 & CVAR10 \\
    \midrule
    Elastic Net & $3.954 \pm 0.064$ & $\textbf{3.711} \pm 0.068$ & $3.936 \pm 0.082$ \\
    Gradient Boosting & $\textbf{3.490} \pm 0.029$ & $3.809 \pm 0.071$ & $4.030 \pm 0.084$ \\
    Lasso & $4.273 \pm 0.080$ & $\textbf{4.077} \pm 0.073$ & $4.241 \pm 0.086$ \\
    Linear Regression & $\textbf{3.040} \pm 0.018$ & $3.551 \pm 0.068$ & $3.808 \pm 0.083$ \\
    Random Forest & $\textbf{3.624} \pm 0.045$ & $3.868 \pm 0.072$ & $4.086 \pm 0.085$ \\
    Ridge & $\textbf{3.040} \pm 0.018$ & $3.551 \pm 0.068$ & $3.808 \pm 0.083$ \\
    SVR (RBF) & $\textbf{4.151} \pm 0.086$ & $4.178 \pm 0.090$ & $4.342 \pm 0.100$ \\
    \midrule
    average & $\textbf{3.653}$ & $3.821$ & $4.036$ \\
    \bottomrule
  \end{tabular}
  \end{center}
\end{table}

\begin{table}
  \caption{Friedman 1 dataset ($n=1000$ and $\sigma=3$)}\label{tab:F1_1K3}
  \medskip
  \begin{center}
  \begin{tabular}{lccc}
    \toprule
    & none & CVAR1 & CVAR10 \\
    \midrule
    Elastic Net & $4.385 \pm 0.023$ & $\textbf{4.058} \pm 0.021$ & $4.270 \pm 0.022$ \\
    Gradient Boosting & $\textbf{3.378} \pm 0.018$ & $3.642 \pm 0.020$ & $3.920 \pm 0.022$ \\
    Lasso & $4.365 \pm 0.023$ & $\textbf{4.159} \pm 0.021$ & $4.359 \pm 0.022$ \\
    Linear Regression & $\textbf{3.893} \pm 0.021$ & $4.044 \pm 0.021$ & $4.256 \pm 0.022$ \\
    Random Forest & $\textbf{3.568} \pm 0.019$ & $3.783 \pm 0.021$ & $4.042 \pm 0.022$ \\
    Ridge & $\textbf{3.893} \pm 0.021$ & $4.044 \pm 0.021$ & $4.256 \pm 0.022$ \\
    SVR (RBF) & $\textbf{3.746} \pm 0.021$ & $3.865 \pm 0.021$ & $4.102 \pm 0.023$ \\
    \midrule
    average & $\textbf{3.890}$ & $3.942$ & $4.172$ \\
    \bottomrule
  \end{tabular}
  \end{center}
\end{table}

\begin{table}
  \caption{Friedman 2 dataset ($n=1000$ and $\sigma=3$)}\label{tab:F2_1K3}
  \medskip
  \begin{center}
  \begin{tabular}{lccc}
    \toprule
    & none & CVAR1 & CVAR10 \\
    \midrule
    Elastic Net & $181.546 \pm 1.247$ & $\textbf{151.839} \pm 1.044$ & $181.262 \pm 1.604$ \\
    Gradient Boosting & $\textbf{23.936} \pm 0.181$ & $141.982 \pm 0.952$ & $171.762 \pm 1.655$ \\
    Lasso & $\textbf{137.739} \pm 0.784$ & $151.822 \pm 1.039$ & $181.216 \pm 1.603$ \\
    Linear Regression & $\textbf{137.743} \pm 0.784$ & $151.853 \pm 1.040$ & $181.245 \pm 1.604$ \\
    Random Forest & $\textbf{20.166} \pm 0.202$ & $141.894 \pm 0.935$ & $171.675 \pm 1.642$ \\
    Ridge & $\textbf{137.746} \pm 0.784$ & $151.853 \pm 1.040$ & $181.246 \pm 1.604$ \\
    SVR (RBF) & $345.749 \pm 2.419$ & $\textbf{169.058} \pm 1.464$ & $191.746 \pm 2.032$ \\
    \midrule
    average & $\textbf{140.661}$ & $151.471$ & $180.022$ \\
    \bottomrule
  \end{tabular}
  \end{center}
\end{table}

\begin{table}
  \caption{Friedman 3 dataset ($n=1000$ and $\sigma=3$)}\label{tab:F3_1K3}
  \medskip
  \begin{center}
  \begin{tabular}{lccc}
    \toprule
    & none & CVAR1 & CVAR10 \\
    \midrule
    Elastic Net & $3.019 \pm 0.015$ & $\textbf{3.019} \pm 0.015$ & $3.020 \pm 0.015$ \\
    Gradient Boosting & $3.130 \pm 0.015$ & $3.030 \pm 0.015$ & $\textbf{3.021} \pm 0.015$ \\
    Lasso & $3.019 \pm 0.015$ & $\textbf{3.019} \pm 0.015$ & $3.020 \pm 0.015$ \\
    Linear Regression & $\textbf{3.018} \pm 0.015$ & $3.030 \pm 0.015$ & $3.018 \pm 0.015$ \\
    Random Forest & $3.146 \pm 0.016$ & $3.032 \pm 0.015$ & $\textbf{3.023} \pm 0.015$ \\
    Ridge & $\textbf{3.018} \pm 0.015$ & $3.030 \pm 0.015$ & $3.018 \pm 0.015$ \\
    SVR (RBF) & $3.054 \pm 0.016$ & $3.030 \pm 0.015$ & $\textbf{3.020} \pm 0.015$ \\
    \midrule
    average & $3.058$ & $3.027$ & $\textbf{3.020}$ \\
    \bottomrule
  \end{tabular}
  \end{center}
\end{table}

In Tables~\ref{tab:BL1K1}--\ref{tab:F3_1K1} we have $n=1000$ examples,
but the noise level is smaller, $\sigma=1$.
The results for our methods appear even worse
than for the more challenging case $\sigma=3$;
the base algorithms work best on average in nine cases out of eleven.

\begin{table}
  \caption{Bounded logistic dataset ($n=1000$ and $\sigma=1$)}\label{tab:BL1K1}
  \medskip
  \begin{center}
  \begin{tabular}{lccc}
    \toprule
    & none & CVAR1 & CVAR10 \\
    \midrule
    Elastic Net & $2.580 \pm 0.022$ & $\textbf{1.750}\pm0.016$ & $1.850 \pm 0.015$ \\
    Gradient Boosting & $\textbf{1.708}\pm0.021$ & $1.735 \pm 0.015$ & $1.841 \pm 0.014$ \\
    Lasso & $2.877 \pm 0.026$ & $\textbf{2.385}\pm0.026$ & $2.449 \pm 0.025$ \\
    Linear Regression & $1.630 \pm 0.023$ & $\textbf{1.380}\pm0.007$ & $1.506 \pm 0.007$ \\
    Random Forest & $\textbf{1.870}\pm0.023$ & $1.932 \pm 0.019$ & $2.024 \pm 0.018$ \\
    Ridge & $1.629 \pm 0.023$ & $\textbf{1.380}\pm0.007$ & $1.506 \pm 0.007$ \\
    SVR (RBF) & $1.578 \pm 0.017$ & $\textbf{1.499}\pm0.008$ & $1.614 \pm 0.008$ \\
    \midrule
    average & $1.982$ & $\textbf{1.723}$ & $1.827$ \\
    \bottomrule
  \end{tabular}
  \end{center}
\end{table}

\begin{table}
  \caption{Linear Gaussian dataset ($n=1000$ and $\sigma=1$)}
  \label{tab:LG1K1}
  \medskip
  \begin{center}
    \begin{tabular}{lccc}
    \toprule
    & none & CVAR1 & CVAR10 \\
    \midrule
    Elastic Net & $2.141 \pm 0.029$ & $\textbf{1.746} \pm 0.024$ & $1.957 \pm 0.031$ \\
    Gradient Boosting & $\textbf{1.326} \pm 0.015$ & $1.662 \pm 0.028$ & $1.890 \pm 0.035$ \\
    Lasso & $2.471 \pm 0.032$ & $\textbf{2.184} \pm 0.030$ & $2.329 \pm 0.035$ \\
    Linear Regression & $\textbf{1.005} \pm 0.005$ & $1.530 \pm 0.025$ & $1.773 \pm 0.032$ \\
    Random Forest & $\textbf{1.566} \pm 0.024$ & $1.787 \pm 0.032$ & $1.993 \pm 0.039$ \\
    Ridge & $\textbf{1.005} \pm 0.005$ & $1.530 \pm 0.025$ & $1.772 \pm 0.032$ \\
    SVR (RBF) & $\textbf{1.418} \pm 0.020$ & $1.653 \pm 0.028$ & $1.878 \pm 0.035$ \\
    \midrule
    average & $\textbf{1.562}$ & $1.728$ & $1.942$ \\
    \bottomrule
  \end{tabular}
  \end{center}
\end{table}

\begin{table}
  \caption{Nonlinear dataset ($n=1000$ and $\sigma=1$)}
  \label{tab:NL1K1}
  \medskip
  \begin{center}
  \begin{tabular}{lccc}
    \toprule
    & none & CVAR1 & CVAR10 \\
    \midrule
    Elastic Net & $2.469 \pm 0.028$ & $\textbf{2.073} \pm 0.024$ & $2.285 \pm 0.031$ \\
    Gradient Boosting & $\textbf{1.430} \pm 0.016$ & $1.820 \pm 0.029$ & $2.077 \pm 0.036$ \\
    Lasso & $2.755 \pm 0.032$ & $\textbf{2.454} \pm 0.030$ & $2.607 \pm 0.035$ \\
    Linear Regression & $\textbf{1.496} \pm 0.008$ & $1.900 \pm 0.023$ & $2.139 \pm 0.031$ \\
    Random Forest & $\textbf{1.735} \pm 0.025$ & $1.993 \pm 0.033$ & $2.221 \pm 0.039$ \\
    Ridge & $\textbf{1.496} \pm 0.008$ & $1.900 \pm 0.023$ & $2.139 \pm 0.031$ \\
    SVR (RBF) & $\textbf{1.622} \pm 0.020$ & $1.873 \pm 0.028$ & $2.116 \pm 0.035$ \\
    \midrule
    average & $\textbf{1.858}$ & $2.002$ & $2.226$ \\
    \bottomrule
  \end{tabular}
  \end{center}
\end{table}

\begin{table}
  \caption{Heteroscedastic noise dataset ($n=1000$ and $\sigma=1$)}
  \label{tab:HSN1K1}
  \medskip
  \begin{center}
  \begin{tabular}{lccc}
    \toprule
    & none & CVAR1 & CVAR10 \\
    \midrule
    Elastic Net & $2.379 \pm 0.027$ & $\textbf{2.017} \pm 0.023$ & $2.221 \pm 0.030$ \\
    Gradient Boosting & $\textbf{1.697} \pm 0.014$ & $1.964 \pm 0.026$ & $2.182 \pm 0.033$ \\
    Lasso & $2.683 \pm 0.030$ & $\textbf{2.417} \pm 0.029$ & $2.558 \pm 0.034$ \\
    Linear Regression & $\textbf{1.427} \pm 0.010$ & $1.829 \pm 0.023$ & $2.062 \pm 0.030$ \\
    Random Forest & $\textbf{1.875} \pm 0.022$ & $2.058 \pm 0.030$ & $2.258 \pm 0.036$ \\
    Ridge & $\textbf{1.427} \pm 0.010$ & $1.829 \pm 0.023$ & $2.062 \pm 0.030$ \\
    SVR (RBF) & $\textbf{1.777} \pm 0.019$ & $1.947 \pm 0.026$ & $2.162 \pm 0.033$ \\
    \midrule
    average & $\textbf{1.895}$ & $2.009$ & $2.215$ \\
    \bottomrule
  \end{tabular}
  \end{center}
\end{table}

\begin{table}
  \caption{Heavy-tailed noise dataset ($n=1000$ and $\sigma=1$)}
  \label{tab:HTN1K1}
  \medskip
  \begin{center}
  \begin{tabular}{lccc}
    \toprule
    & none & CVAR1 & CVAR10 \\
    \midrule
    Elastic Net & $2.492 \pm 0.034$ & $\textbf{2.147} \pm 0.033$ & $2.335 \pm 0.037$ \\
    Gradient Boosting & $\textbf{1.907} \pm 0.032$ & $2.108 \pm 0.037$ & $2.309 \pm 0.040$ \\
    Lasso & $2.783 \pm 0.037$ & $\textbf{2.529} \pm 0.036$ & $2.660 \pm 0.039$ \\
    Linear Regression & $\textbf{1.650} \pm 0.031$ & $1.977 \pm 0.035$ & $2.192 \pm 0.038$ \\
    Random Forest & $\textbf{2.043} \pm 0.034$ & $2.189 \pm 0.039$ & $2.376 \pm 0.042$ \\
    Ridge & $\textbf{1.650} \pm 0.031$ & $1.977 \pm 0.035$ & $2.192 \pm 0.038$ \\
    SVR (RBF) & $\textbf{1.932} \pm 0.034$ & $2.083 \pm 0.036$ & $2.279 \pm 0.040$ \\
    \midrule
    average & $\textbf{2.065}$ & $2.144$ & $2.335$ \\
    \bottomrule
  \end{tabular}
  \end{center}
\end{table}

\begin{table}
  \caption{Outlier contamination dataset ($n=1000$ and $\sigma=1$)}
  \label{tab:OC1K1}
  \medskip
  \begin{center}
  \begin{tabular}{lccc}
    \toprule
    & none & CVAR1 & CVAR10 \\
    \midrule
    Elastic Net & $2.329 \pm 0.032$ & $\textbf{1.960} \pm 0.033$ & $2.162 \pm 0.034$ \\
    Gradient Boosting & $\textbf{1.684} \pm 0.034$ & $1.915 \pm 0.036$ & $2.127 \pm 0.037$ \\
    Lasso & $2.634 \pm 0.034$ & $\textbf{2.371} \pm 0.034$ & $2.508 \pm 0.036$ \\
    Linear Regression & $\textbf{1.390} \pm 0.036$ & $1.774 \pm 0.035$ & $2.005 \pm 0.036$ \\
    Random Forest & $\textbf{1.852} \pm 0.035$ & $2.011 \pm 0.037$ & $2.208 \pm 0.039$ \\
    Ridge & $\textbf{1.390} \pm 0.036$ & $1.774 \pm 0.035$ & $2.005 \pm 0.036$ \\
    SVR (RBF) & $\textbf{1.707} \pm 0.034$ & $1.887 \pm 0.036$ & $2.097 \pm 0.037$ \\
    \midrule
    average & $\textbf{1.855}$ & $1.956$ & $2.159$ \\
    \bottomrule
  \end{tabular}
  \end{center}
\end{table}

\begin{table}
  \caption{Sparse signals dataset ($n=1000$ and $\sigma=1$)}
  \label{tab:SHDS1K1}
  \medskip
  \begin{center}
  \begin{tabular}{lccc}
    \toprule
    & none & CVAR1 & CVAR10 \\
    \midrule
    Elastic Net & $1.153 \pm 0.016$ & $\textbf{1.060} \pm 0.011$ & $1.103 \pm 0.016$ \\
    Gradient Boosting & $\textbf{1.039} \pm 0.006$ & $1.067 \pm 0.012$ & $1.109 \pm 0.017$ \\
    Lasso & $1.204 \pm 0.018$ & $\textbf{1.131} \pm 0.013$ & $1.165 \pm 0.016$ \\
    Linear Regression & $\textbf{0.993} \pm 0.005$ & $1.050 \pm 0.011$ & $1.093 \pm 0.016$ \\
    Random Forest & $\textbf{1.030} \pm 0.006$ & $1.067 \pm 0.012$ & $1.108 \pm 0.017$ \\
    Ridge & $\textbf{0.993} \pm 0.005$ & $1.050 \pm 0.011$ & $1.093 \pm 0.016$ \\
    SVR (RBF) & $\textbf{1.073} \pm 0.009$ & $1.090 \pm 0.014$ & $1.127 \pm 0.018$ \\
    \midrule
    average & $\textbf{1.069}$ & $1.073$ & $1.114$ \\
    \bottomrule
  \end{tabular}
  \end{center}
\end{table}

\begin{table}
  \caption{Covariate shift dataset ($n=1000$ and $\sigma=1$)}
  \label{tab:CS1K1}
  \medskip
  \begin{center}
  \begin{tabular}{lccc}
    \toprule
    & none & CVAR1 & CVAR10 \\
    \midrule
    Elastic Net & $2.706 \pm 0.080$ & $\textbf{2.383} \pm 0.088$ & $2.691 \pm 0.105$ \\
    Gradient Boosting & $\textbf{1.717} \pm 0.037$ & $2.384 \pm 0.091$ & $2.695 \pm 0.107$ \\
    Lasso & $3.149 \pm 0.095$ & $\textbf{2.932} \pm 0.090$ & $3.140 \pm 0.104$ \\
    Linear Regression & $\textbf{1.013} \pm 0.006$ & $2.132 \pm 0.092$ & $2.491 \pm 0.108$ \\
    Random Forest & $\textbf{2.152} \pm 0.060$ & $2.577 \pm 0.092$ & $2.852 \pm 0.108$ \\
    Ridge & $\textbf{1.013} \pm 0.006$ & $2.132 \pm 0.092$ & $2.491 \pm 0.108$ \\
    SVR (RBF) & $\textbf{2.740} \pm 0.101$ & $2.939 \pm 0.111$ & $3.127 \pm 0.121$ \\
    \midrule
    average & $\textbf{2.070}$ & $2.497$ & $2.784$ \\
    \bottomrule
  \end{tabular}
  \end{center}
\end{table}

\begin{table}
  \caption{Friedman 1 dataset ($n=1000$ and $\sigma=1$)}
  \label{tab:F1_1K1}
  \medskip
  \begin{center}
  \begin{tabular}{lccc}
    \toprule
    & none & CVAR1 & CVAR10 \\
    \midrule
    Elastic Net & $3.330 \pm 0.015$ & $\textbf{2.929} \pm 0.014$ & $3.177 \pm 0.015$ \\
    Gradient Boosting & $\textbf{1.551} \pm 0.009$ & $2.232 \pm 0.012$ & $2.575 \pm 0.015$ \\
    Lasso & $3.301 \pm 0.016$ & $\textbf{3.053} \pm 0.015$ & $3.291 \pm 0.016$ \\
    Linear Regression & $\textbf{2.634} \pm 0.014$ & $2.908 \pm 0.014$ & $3.156 \pm 0.015$ \\
    Random Forest & $\textbf{2.039} \pm 0.011$ & $2.509 \pm 0.012$ & $2.817 \pm 0.015$ \\
    Ridge & $\textbf{2.633} \pm 0.014$ & $2.908 \pm 0.014$ & $3.156 \pm 0.015$ \\
    SVR (RBF) & $\textbf{2.172} \pm 0.013$ & $2.532 \pm 0.014$ & $2.831 \pm 0.015$ \\
    \midrule
    average & $\textbf{2.523}$ & $2.725$ & $3.000$ \\
    \bottomrule
  \end{tabular}
  \end{center}
\end{table}

\begin{table}
  \caption{Friedman 2 dataset ($n=1000$ and $\sigma=1$)}
  \label{tab:F2_1K1}
  \medskip
  \begin{center}
  \begin{tabular}{lccc}
    \toprule
    & none & CVAR1 & CVAR10 \\
    \midrule
    Elastic Net & $181.524 \pm 1.246$ & $\textbf{151.829} \pm 1.043$ & $181.229 \pm 1.603$ \\
    Gradient Boosting & $\textbf{23.714} \pm 0.181$ & $141.966 \pm 0.956$ & $171.717 \pm 1.659$ \\
    Lasso & $\textbf{137.715} \pm 0.782$ & $151.815 \pm 1.038$ & $181.180 \pm 1.603$ \\
    Linear Regression & $\textbf{137.720} \pm 0.782$ & $151.840 \pm 1.039$ & $181.208 \pm 1.603$ \\
    Random Forest & $\textbf{19.889} \pm 0.206$ & $141.886 \pm 0.935$ & $171.629 \pm 1.642$ \\
    Ridge & $\textbf{137.723} \pm 0.782$ & $151.839 \pm 1.039$ & $181.208 \pm 1.603$ \\
    SVR (RBF) & $345.751 \pm 2.424$ & $\textbf{169.015} \pm 1.462$ & $191.694 \pm 2.032$ \\
    \midrule
    average & $\textbf{140.576}$ & $151.455$ & $179.981$ \\
    \bottomrule
  \end{tabular}
  \end{center}
\end{table}

\begin{table}
  \caption{Friedman 3 dataset ($n=1000$ and $\sigma=1$)}
  \label{tab:F3_1K1}
  \medskip
  \begin{center}
  \begin{tabular}{lccc}
    \toprule
    & none & CVAR1 & CVAR10 \\
    \midrule
    Elastic Net & $1.048 \pm 0.005$ & $1.048 \pm 0.005$ & $\textbf{1.048} \pm 0.005$ \\
    Gradient Boosting & $1.045 \pm 0.005$ & $\textbf{1.024} \pm 0.005$ & $1.027 \pm 0.005$ \\
    Lasso & $1.048 \pm 0.005$ & $1.048 \pm 0.005$ & $\textbf{1.048} \pm 0.005$ \\
    Linear Regression & $1.024 \pm 0.005$ & $\textbf{1.021} \pm 0.005$ & $1.021 \pm 0.005$ \\
    Random Forest & $1.054 \pm 0.005$ & $\textbf{1.027} \pm 0.005$ & $1.028 \pm 0.005$ \\
    Ridge & $1.024 \pm 0.005$ & $\textbf{1.021} \pm 0.005$ & $1.021 \pm 0.005$ \\
    SVR (RBF) & $1.035 \pm 0.005$ & $\textbf{1.024} \pm 0.005$ & $1.026 \pm 0.005$ \\
    \midrule
    average & $1.040$ & $\textbf{1.030}$ & $1.031$ \\
    \bottomrule
  \end{tabular}
  \end{center}
\end{table}

Finally, in Tables~\ref{tab:CB}--\ref{tab:SP} we show results
for the following four real-life datasets:
Bias correction
(in full, Bias correction of numerical prediction model temperature forecast)
\citep{BC} with 7750 examples and 21 features,
Wine quality \citep{WQ}
with 6497 examples and 12 features
(including colour, red or white),
Airfoil self-noise \citep{AF} with 1503 examples and 5 features,
and Student performance \citep{SP} with 2161 examples and 39 features.
For the last dataset, we define the label as the cumulative test score
computed by summing the reading and mathematics scaled scores
across all four grades (Kindergarten through Grade 3) for each student.
The results are mixed; in two cases,
our methods slightly improve the performance of the base algorithms on average.
(One of the datasets where our methods do not improve, and even slightly lower,
the quality of predictions is the Wine quality dataset in Table~\ref{tab:WQ};
for this dataset the label variable is bounded
but with bounds that are far from typical values of the labels.)

\section{Code Availability}

Python code for reproducing our experiments
in Sect.~\ref{sec:experiments} and Appendix~\ref{app:extra}
is available at the GitHub repository
\href{https://github.com/ip200/ivar-experiments}{https://github.com/ip200/ivar-experiments}
\citep{Petej:2026}.

\begin{table}
  \caption{Bias correction dataset}\label{tab:CB}
  \medskip
  \begin{center}
  \begin{tabular}{lccc}
    \toprule
    & none & CVAR1 & CVAR10 \\
    \midrule
    Elastic Net & $1.836 \pm 0.003$ & $\textbf{1.590} \pm 0.003$ & $1.605 \pm 0.003$ \\
    Gradient Boosting & $\textbf{1.208} \pm 0.002$ & $1.266 \pm 0.003$ & $1.286 \pm 0.003$ \\
    Lasso & $1.977 \pm 0.004$ & $\textbf{1.738} \pm 0.003$ & $1.751 \pm 0.003$ \\
    Linear Regression & $\textbf{1.463} \pm 0.003$ & $1.507 \pm 0.003$ & $1.523 \pm 0.003$ \\
    Random Forest & $\textbf{0.979} \pm 0.002$ & $1.067 \pm 0.003$ & $1.092 \pm 0.003$ \\
    Ridge & $\textbf{1.463} \pm 0.003$ & $1.507 \pm 0.003$ & $1.523 \pm 0.003$ \\
    SVR (RBF) & $\textbf{1.152} \pm 0.003$ & $1.233 \pm 0.003$ & $1.254 \pm 0.003$ \\
    \midrule
    average & $1.440$ & $\textbf{1.416}$ & $1.434$ \\
    \bottomrule
  \end{tabular}
  \end{center}
\end{table}

\begin{table}
  \caption{Wine quality dataset}\label{tab:WQ}
  \medskip
  \begin{center}
  \begin{tabular}{lccc}
    \toprule
    & none & CVAR1 & CVAR10 \\
    \midrule
    Elastic Net & $\textbf{0.870} \pm 0.001$ & $0.870 \pm 0.001$ & $0.870 \pm 0.001$ \\
    Gradient Boosting & $0.681 \pm 0.001$ & $\textbf{0.681} \pm 0.001$ & $0.682 \pm 0.001$ \\
    Lasso & $\textbf{0.870} \pm 0.001$ & $0.870 \pm 0.001$ & $0.870 \pm 0.001$ \\
    Linear Regression & $0.733 \pm 0.001$ & $\textbf{0.732} \pm 0.001$ & $0.732 \pm 0.001$ \\
    Random Forest & $\textbf{0.604} \pm 0.002$ & $0.611 \pm 0.001$ & $0.612 \pm 0.001$ \\
    Ridge & $0.733 \pm 0.001$ & $\textbf{0.732} \pm 0.001$ & $0.732 \pm 0.001$ \\
    SVR (RBF) & $0.676 \pm 0.001$ & $\textbf{0.676} \pm 0.001$ & $0.676 \pm 0.001$ \\
    \midrule
    average & $\textbf{0.738}$ & $0.739$ & $0.739$ \\
    \bottomrule
  \end{tabular}
  \end{center}
\end{table}

\begin{table}
  \caption{Airfoil self-noise dataset}
  \label{tab:AF}
  \medskip
  \begin{center}
  \begin{tabular}{lccc}
    \toprule
    & none & CVAR1 & CVAR10 \\
    \midrule
    Elastic Net & $5.616 \pm 0.018$ & $\textbf{4.857} \pm 0.020$ & $4.988 \pm 0.019$ \\
    Gradient Boosting & $\textbf{2.658} \pm 0.016$ & $3.122 \pm 0.016$ & $3.402 \pm 0.017$ \\
    Lasso & $5.541 \pm 0.019$ & $\textbf{5.061} \pm 0.022$ & $5.176 \pm 0.021$ \\
    Linear Regression & $4.820 \pm 0.021$ & $\textbf{4.667} \pm 0.018$ & $4.813 \pm 0.018$ \\
    Random Forest & $\textbf{1.767} \pm 0.014$ & $2.605 \pm 0.015$ & $2.940 \pm 0.016$ \\
    Ridge & $4.820 \pm 0.020$ & $\textbf{4.668} \pm 0.018$ & $4.813 \pm 0.018$ \\
    SVR (RBF) & $\textbf{3.797} \pm 0.019$ & $4.093 \pm 0.018$ & $4.293 \pm 0.018$ \\
    \midrule
    average & $\textbf{4.145}$ & $4.153$ & $4.347$ \\
    \bottomrule
  \end{tabular}
  \end{center}
\end{table}

\begin{table}
  \caption{Student performance dataset}
  \label{tab:SP}
  \medskip
  \begin{center}
  \begin{tabular}{lccc}
    \toprule
    & none & CVAR1 & CVAR10 \\
    \midrule
    Elastic Net & $\textbf{241.009}\pm0.626$ & $241.361 \pm 0.627$ & $241.593 \pm 0.635$ \\
    Gradient Boosting & $231.824 \pm 0.613$ & $\textbf{231.133}\pm0.623$ & $231.877 \pm 0.631$ \\
    Lasso & $\textbf{240.262}\pm0.633$ & $240.708 \pm 0.634$ & $241.015 \pm 0.640$ \\
    Linear Regression & $241.279 \pm 0.649$ & $\textbf{241.061}\pm0.632$ & $241.335 \pm 0.639$ \\
    Random Forest & $235.259 \pm 0.691$ & $\textbf{233.089}\pm0.668$ & $233.800 \pm 0.673$ \\
    Ridge & $\textbf{240.821}\pm0.636$ & $240.930 \pm 0.630$ & $241.216 \pm 0.636$ \\
    SVR (RBF) & $256.059 \pm 0.745$ & $\textbf{244.047}\pm0.636$ & $244.192 \pm 0.646$ \\
    \midrule
    average & $240.931$ & $\textbf{238.904}$ & $239.290$ \\
    \bottomrule
  \end{tabular}
  \end{center}
\end{table}
\end{document}